\newtheorem{theorem}{Theorem}[section]
\newtheorem{definition}{Definition}[section]
\newtheorem{assumption}{Assumption}[section]
\newcommand{\norminf}[1]{\left\lVert#1\right\rVert_\infty}
\newcommand{\defeq}{\mathrel{\stackrel{\makebox[0pt]{\mbox{\normalfont\tiny def}}}{=}}}
\newcommand{\trans}{P}
\newcommand{\backup}{\mathcal{T}}
\newcommand{\Qclass}{\mathcal{Q}}
\newcommand{\linfnorm}{{\ell_\infty}}
\newcommand{\Tpi}{\backup^\pi}
\newcommand{\TPi}{\backup^\Pi}
\newcommand{\VPi}{V^\Pi}
\newcommand{\dataset}{\mathcal{D}}
\newcommand{\expec}{\mathbb{E}}
\newcommand{\rhoinit}{{\rho_0}}
\newcommand{\Pieps}{{\Pi_\epsilon}}
\newcommand{\projerr}{\delta}
\newcommand{\valerr}{\zeta}
\newcommand{\mmd}{\operatorname{MMD}}
\title{Stabilizing Off-Policy Q-Learning via Bootstrapping Error Reduction}
\newcommand*\samethanks[1][\value{footnote}]{\footnotemark[#1]}
\author{%
    Aviral Kumar\thanks{Equal Contribution}\\
    UC Berkeley\\
    \texttt{aviralk@berkeley.edu}\\
    \And
    Justin Fu\samethanks\ \\
    UC Berkeley\\
    \texttt{justinjfu@eecs.berkeley.edu}\\
    \AND
    George Tucker\\
    Google Brain\\
    \texttt{gjt@google.com}\\
    \And
    Sergey Levine\\
    UC Berkeley, Google Brain\\
    \texttt{svlevine@eecs.berkeley.edu}
}
\begin{document}

\maketitle

\begin{abstract}
Off-policy reinforcement learning aims to leverage experience collected from prior policies for sample-efficient learning. However, in practice, commonly used off-policy approximate dynamic programming methods based on Q-learning and actor-critic methods are highly sensitive to the data distribution, and can make only limited progress without collecting additional on-policy data. As a step towards more robust off-policy algorithms, we study the setting where the off-policy experience is fixed and there is no further interaction with the environment. We identify \emph{bootstrapping error} as a key source of instability in current methods. Bootstrapping error is due to bootstrapping from actions that lie outside of the training data distribution, and it accumulates via the Bellman backup operator. We theoretically analyze bootstrapping error, and demonstrate how carefully constraining action selection in the backup can mitigate it. Based on our analysis, we propose a practical algorithm, bootstrapping error accumulation reduction (BEAR). We demonstrate that BEAR is able to learn robustly from different off-policy distributions, including random and suboptimal demonstrations, on a range of continuous control tasks.

\end{abstract}

\section{Introduction}
\label{sec:intro}
\vspace{-5pt}

One of the primary drivers
of the success of machine learning methods in open-world perception settings, such as computer vision~\cite{he2016resnet} and NLP~\cite{devlin2018bert}, has been the ability of high-capacity function approximators, such as deep neural networks, to learn generalizable models from large amounts of data. Reinforcement learning (RL) has proven comparatively difficult to scale to unstructured real-world settings because most RL algorithms require active data collection. As a result, RL algorithms can learn complex behaviors in simulation, where data collection is straightforward, 
but real-world performance is limited by the expense of active data collection. 
In some domains, such as autonomous driving~\cite{yu2018bdd} and recommender systems~\citep{bennett2007netflix}, previously collected datasets are plentiful. Algorithms that can utilize such datasets effectively would not only make real-world RL more practical, but also would enable substantially better generalization by incorporating diverse prior experience.  

In principle, off-policy RL algorithms can leverage this data; however, in practice, off-policy algorithms are limited in their ability to learn entirely from off-policy data. 
Recent off-policy RL methods   (e.g.,~\citep{haarnoja2018sac,munos2016safe,kalashnikov18qtopt,impala2018}) have demonstrated sample-efficient performance on complex tasks in robotics~\cite{kalashnikov18qtopt} and simulated environments~\cite{mujoco}. 
However, these methods can still fail to learn when presented with arbitrary off-policy data without the opportunity to collect more experience from the environment. This issue persists even when the off-policy data comes from effective expert policies, which in principle should address any exploration challenge~\citep{deBruin2015importance,fujimoto2018off,fu2019diagnosing}. This sensitivity to the training data distribution is a limitation of practical off-policy RL algorithms, and one would hope that an off-policy algorithm should be able to learn reasonable policies through training on static datasets before being deployed in the real world. 
In this paper, we aim to develop off-policy, value-based RL methods that can learn from large, static datasets. As we show, a crucial challenge in applying value-based methods to off-policy scenarios arises in the bootstrapping process employed
when Q-functions are evaluated on out of \textit{out-of-distribution} action inputs for computing the backup when training from off-policy data. This may introduce errors in the Q-function and the algorithm is unable to collect new data in order to remedy those errors, making training unstable and potentially diverging. 
Our primary contribution is an analysis of error accumulation in the bootstrapping process due to out-of-distribution inputs and a practical way of addressing this error. 
First, we formalize and analyze the reasons for instability and poor performance when learning from off-policy data. We show that, through careful action selection, error propagation through the Q-function can be mitigated. We then propose a principled algorithm called \emph{bootstrapping error accumulation reduction} (BEAR) to control bootstrapping error in practice, which uses the notion of \emph{support-set matching} to prevent error accumulation. Through systematic experiments, we show the effectiveness of our method on continuous-control MuJoCo tasks, with a variety of off-policy datasets: generated by a random, suboptimal, or optimal policies. BEAR is consistently robust to the training dataset, matching or exceeding the state-of-the-art in all cases, whereas existing algorithms only perform well for specific datasets.

\vspace{-10pt}
\section{Related Work}
\label{sec:related}
\vspace{-10pt}
In this work, we study off-policy reinforcement learning with static datasets. Errors arising from inadequate sampling, distributional shift, and function approximation have been rigorously studied as ``error propagation'' in approximate dynamic programming (ADP)~\citep{bertsekas1996ndp,munos2003errorapi,farahmand2010error,bruno2015approximate}. These works often study how Bellman errors accumulate and propagate to nearby states via bootstrapping. In this work, we build upon tools from this analysis to show that performing Bellman backups on static datasets leads to error accumulation due to out-of-distribution values. Our approach is motivated as reducing the rate of propagation of error propagation between states.

Our approach constrains actor updates so that the actions remain in the support of the training dataset distribution. Several works have explored similar ideas in the context of off-policy learning learning in online settings. \citet{kakade2002cpi} shows that large policy updates can be destructive, and propose a conservative policy iteration scheme which constrains actor updates to be small for provably convergent learning. \citet{grau-moya2018soft} use a learned prior over actions in the maximum entropy RL framework~\citep{levine2018rlasinference} and justify it as a regularizer based on mutual information. However, none of these methods use static datasets. Importance Sampling based distribution re-weighting~\cite{munos2016safe,gelada2019off,precup2001offpol,mahmood2015emphatic} has also been explored primarily in the context of off-policy policy evaluation.

Most closely related to our work is batch-constrained Q-learning (BCQ)~\citep{fujimoto2018off} and SPIBB~\citep{laroche2019spibb},
which also discuss instability arising from previously unseen actions. \citet{fujimoto2018off} show convergence properties of an action-constrained Bellman backup operator in tabular, error-free settings. We prove stronger results under approximation errors and provide a bound on the \emph{suboptimality} of the solution. This is crucial as it drives the design choices for a practical algorithm. 
As a consequence, although we experimentally find that \citep{fujimoto2018off} outperforms standard Q-learning methods when the off-policy data is collected by an expert, BEAR outperforms \cite{fujimoto2018off} when the off-policy data is collected by a suboptimal policy, as is common in real-life applications. Empirically, we find BEAR  achieves stronger and more consistent results than BCQ across a wide variety of datasets and environments. As we explain below, the BCQ constraint is too aggressive;  BCQ generally fails to substantially improve over the behavior policy, while our method actually improves when the data collection policy is suboptimal or random. SPIBB~\citep{laroche2019spibb}, like BEAR, is an algorithm based on constraining the learned policy to the support of a behavior policy. However, the authors do not extend safe performance guarantees from the batch-constrained case to the relaxed support-constrained case, and do not evaluate on high-dimensional control tasks. REM~\citep{agarwal19striving} is a concurrent work that uses a random convex combination of an ensemble of Q-networks to perform offline reinforcement learning from a static dataset consisting of interaction data generated while training a DQN agent.

\vspace{-10pt}
\section{Background}
\label{sec:background}
\vspace{-0.1in}
We represent the environment as a Markov decision process (MDP) defined by a tuple $(\mathcal{S}, \mathcal{A}, \trans, R, \rhoinit, \gamma)$, where $\mathcal{S}$ is the state space, $\mathcal{A}$ is the action space, $\trans(s' | s, a)$ is the transition distribution, $\rhoinit(s)$ is the initial state distribution, $R(s,a)$ is the reward function, and $\gamma \in (0,1)$ is the discount factor. The goal in RL is to find a policy $\pi(a|s)$ that maximizes the expected cumulative discounted rewards which is also known as the return. The notation $\mu_\pi(s)$ denotes the discounted state marginal of a policy $\pi$, defined as the average state visited by the policy, $\sum_{t=0}^\infty \gamma^t p^t_\pi(s)$. $\trans^\pi$ is shorthand for the transition matrix from $s$ to $s'$ following a certain policy $\pi$, $p(s'|s) = E_{\pi}[p(s'|s,a)]$.

Q-learning learns the optimal state-action value function 
$Q^*(s,a)$, which represents the expected cumulative discounted reward starting in $s$ taking action $a$ and then acting optimally thereafter. The optimal policy can be recovered from $Q^*$ by choosing the maximizing action. Q-learning algorithms are based on iterating the Bellman optimality operator $\backup$, defined as
\begin{align*}
(\backup \hat{Q})(s, a) \coloneqq R(s, a) + \gamma \expec_{T(s'|s,a)}[\max_{a'}\hat{Q}(s', a')].~~~~~
\end{align*}
When the state space is large, we represent $\hat{Q}$ as a hypothesis from the set of function approximators $\Qclass$ (e.g., neural networks). In theory, the estimate of the $Q$-function is updated by projecting $\backup \hat{Q}$ into $\Qclass$ (i.e., minimizing the mean squared Bellman error $\expec_{\nu}[( Q - \backup \hat{Q})^2]$, where $\nu$ is the state occupancy measure under the behaviour policy). This is also referred to a \emph{Q-iteration}. In practice, an empirical estimate of $\backup \hat{Q}$ is formed with samples, and treated as a supervised $\ell_2$ regression target to form the next approximate $Q$-function iterate. 

In large action spaces (e.g., continuous), the maximization $\max_{a'} Q(s', a')$
is generally intractable. Actor-critic methods~\cite{suttonrlbook,fujimoto18addressing,haarnoja2018sac} address this by additionally learning a policy $\pi_{\theta}$ that maximizes the $Q$-function. 
In this work, we study off-policy learning from a static dataset of transitions $\dataset = \{(s, a, s', R(s, a))\}$, collected under an unknown behavior policy $\beta(\cdot|s)$. We denote the distribution over states and actions induced by $\beta$ as $\mu(s,a)$.

\section{Out-of-Distribution Actions in Q-Learning}
\label{sec:Problem Description}


\begin{wrapfigure}{r}{0.5\textwidth}
\vspace{-10pt}
\begin{center}
\vspace{-0.1in}
    \includegraphics[width=0.48\linewidth]{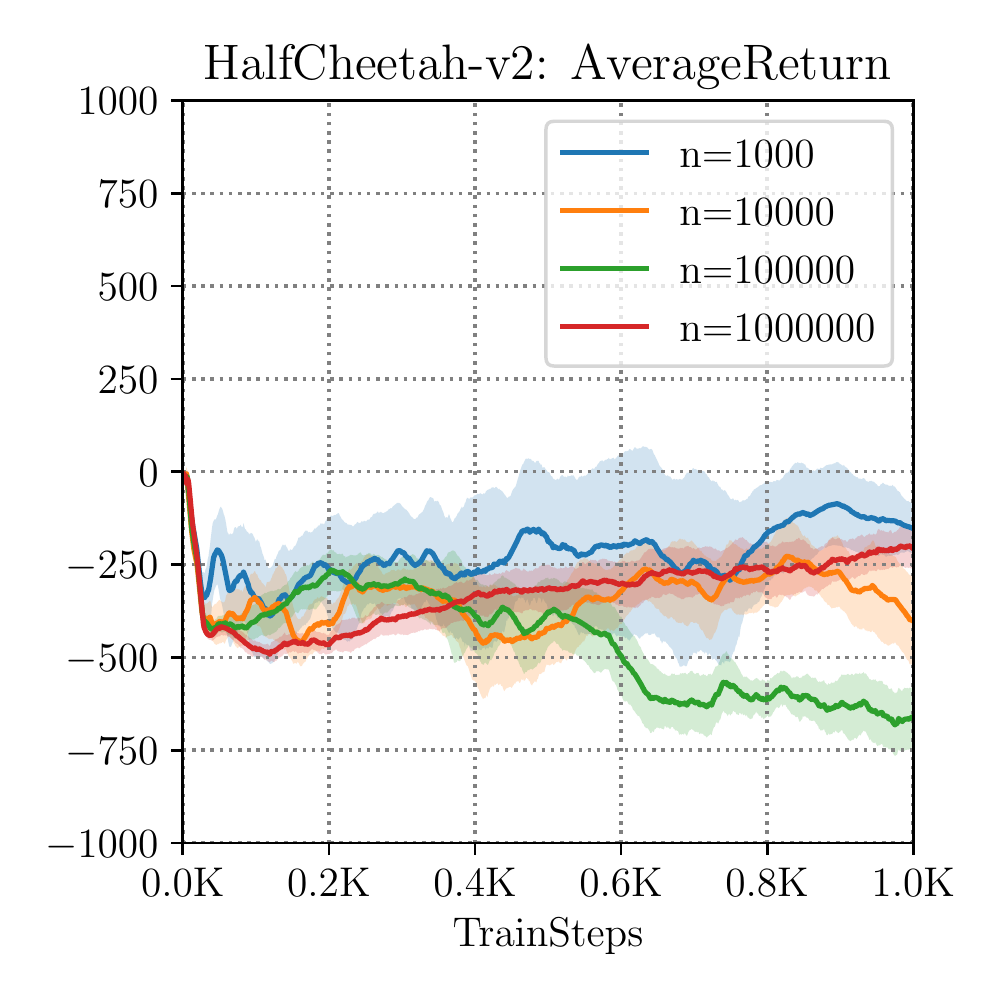}
    ~
    \includegraphics[width=0.48\linewidth]{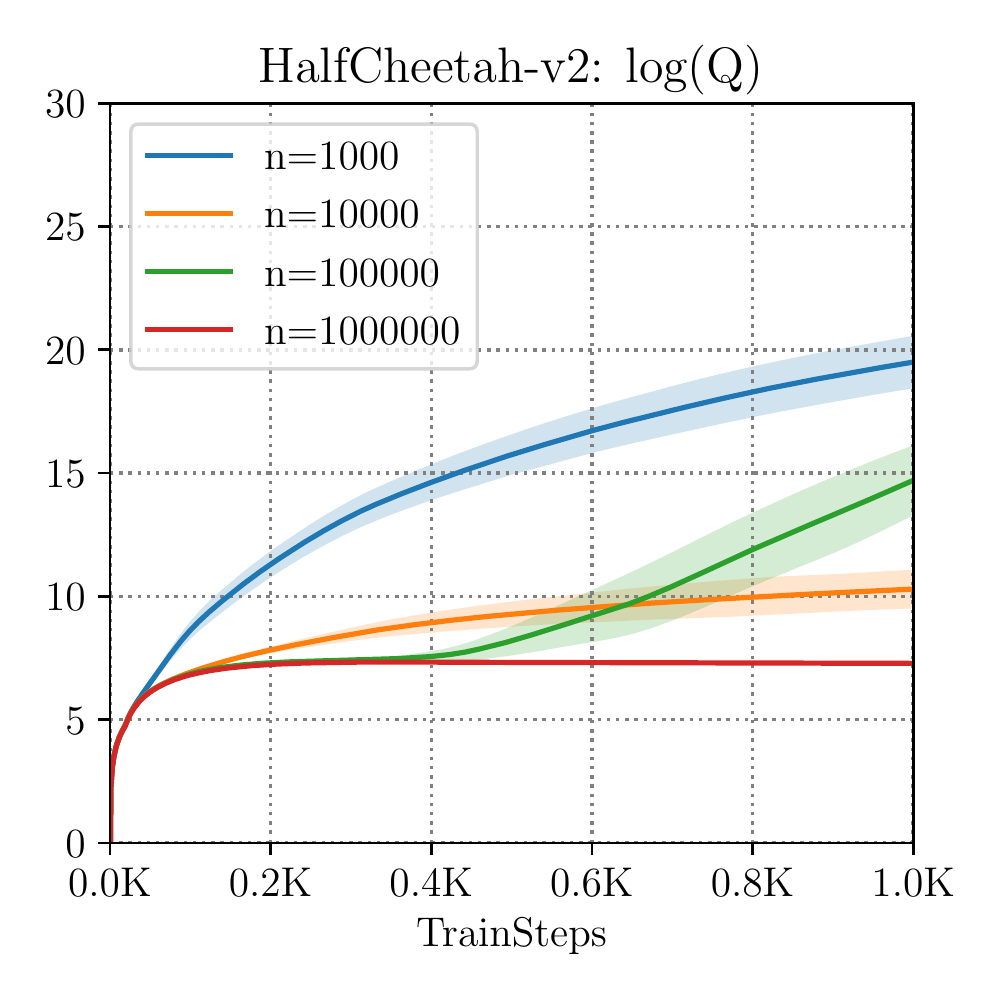}
  \end{center}
 \vspace{-10pt}
  \caption{ \footnotesize Performance of SAC on HalfCheetah-v2 (return (left) and $\log$ Q-values (right)) with off-policy expert data w.r.t. number of training samples ($n$). Note the large discrepancy between returns (which are negative) and $\log$ Q-values (which have large positive values), which is not solved with additional samples.} 
 \vspace{-15pt}
 \label{fig:divergence}
\end{wrapfigure}
Q-learning methods often fail to learn on static, off-policy data, as shown in Figure \ref{fig:divergence}. At first glance, this resembles overfitting, but increasing the size of the static dataset does not rectify the problem, suggesting the issue is more complex. We can understand the source of this instability by examining the form of the Bellman backup. Although minimizing the mean squared Bellman error corresponds to a supervised regression problem, the targets for this regression are themselves derived from the current Q-function estimate. The targets are calculated by maximizing the learned $Q$-values with respect to the action at the next state. However, the $Q$-function estimator is only reliable on inputs from the same distribution as its training set. As a result, na\"{i}vely maximizing the value may evaluate the $\hat{Q}$ estimator on actions that lie far outside of the training distribution, resulting in pathological values that incur large error. We refer to these actions as out-of-distribution (OOD) actions. 

Formally, let $\valerr_k(s, a) = |Q_k(s,a) - Q^*(s,a)|$ denote the total error at iteration $k$ of Q-learning, and let $\projerr_k(s, a) = |Q_k(s,a) - \backup Q_{k-1}(s,a)|$ denote the current Bellman error. Then, we have \mbox{$\valerr_k(s, a) \le \projerr_k(s,a) + \gamma \max_{a'} \expec_{s'}[\valerr_{k-1}(s',a')]$}. In other words, errors from $(s', a')$ are discounted, then accumulated with new errors $\projerr_k(s, a)$ from the current iteration. We expect $\projerr_k(s,a)$ to be high on OOD states and actions, as errors at these state-actions are never directly minimized while training.

To mitigate bootstrapping error, we can restrict the policy to ensure that it output actions that lie in the support of the training distribution. This is distinct from previous work (e.g., BCQ~\citep{fujimoto2018off}) which implicitly constrains the \emph{distribution} of the learned policy to be close to the behavior policy, similarly to behavioral cloning~\cite{Schaal99isimitation}.
While this is sufficient to ensure that actions lie in the training set with high probability, it is overly restrictive. For example, if the behavior policy is close to uniform, the learned policy will behave randomly, resulting in poor performance, even when the data is sufficient to learn a strong policy (see Figure~\ref{fig:gridworld}
for an illustration). {Formally, this means that a learned policy $\pi(a|s)$ has positive density\textit{ only where} the density of the behaviour policy $\beta(a|s)$ is more than a threshold (i.e., $\forall a, \beta(a|s) \leq \varepsilon \implies \pi(a|s) = 0$), instead of a closeness constraint on the value of the density $\pi(a|s)$ and $\beta(a|s)$.}
Our analysis instead reveals a tradeoff between staying within the data distribution and finding a suboptimal solution when the constraint is too restrictive. Our analysis motivates us to restrict the support of the learned policy, but not the probabilities of the actions lying within the support. This avoids evaluating the Q-function estimator on OOD actions, but remains flexible in order to find a performant policy. Our proposed algorithm leverages this insight. 

\subsection{Distribution-Constrained Backups}
\label{sec:dist_constrained}
In this section, we define and analyze a backup operator that restricts the set of policies used in the maximization of the Q-function, and we derive performance bounds which depend on the restricted set. This provides motivation for constraining policy support to the data distribution. We begin with the definition of a distribution-constrained operator:
\begin{definition}[Distribution-constrained operators]
Given a set of policies $\Pi$
, the distribution-constrained backup operator is defined as:
\begin{align*}
\TPi Q(s, a) \defeq \expec \big[ R(s, a) + \gamma \max_{\pi \in \Pi} \expec_{\trans(s' | s, a)}\left[V(s') \right] \big]
\ \ \ \ \ \ \ \ \ \ \ \ 
V(s) \defeq \max_{\pi \in \Pi} \expec_{\pi}[Q(s, a)]\ \ .
\end{align*}
\end{definition}
\vspace{-5pt}
This backup operator satisfies properties of the standard Bellman backup, such as convergence to a fixed point, as discussed in Appendix~\ref{app:constrained_backup}. To analyze the (sub)optimality of performing this backup under approximation error, we first quantify two sources of error. The first is a \emph{suboptimality bias}. The optimal policy may lie outside the policy constraint set, and thus a suboptimal solution will be found. The second arises from distribution shift between the training distribution and the policies used for backups. This formalizes the notion of OOD actions. 
To capture suboptimality in the final solution, we define a \emph{suboptimality constant}, which measures how far $\pi^*$ is from $\Pi$. 
\begin{definition}[Suboptimality constant]
The suboptimality constant is defined as:
\[ \alpha(\Pi) = \max_{s,a} |\TPi Q^*(s, a) - \backup Q^*(s, a)|. \]
\end{definition}
\vspace{-10pt}
Next, we define a concentrability coefficient~\citep{munos2005erroravi}, which quantifies how far the visitation distribution generated by policies from $\Pi$ is  from the training data distribution. This constant captures the degree to which states and actions are out of distribution.
\begin{assumption}[Concentrability]
Let $\rhoinit$ denote the initial state distribution, and $\mu(s, a)$ denote the distribution of the training data over $\mathcal{S} \times \mathcal{A}$, with marginal $\mu(s)$ over $\mathcal{S}$. Suppose there exist coefficients $c(k)$ such that for any $\pi_1, ... \pi_k \in \Pi$ and $s \in \mathcal{S}$:
\[
\rhoinit P^{\pi_1}P^{\pi_2}...P^{\pi_k}(s) \le c(k) \mu(s),
\]
where $P^{\pi_i}$ is the transition operator on states induced by $\pi_i$.
Then, define the concentrability coefficient $C(\Pi)$ as
\[
C(\Pi) \defeq (1-\gamma)^2\sum_{k=1}^\infty k\gamma^{k-1}c(k).
\] \label{assumption:conc} \end{assumption} 
\vspace{-10pt}
To provide some intuition for $C(\Pi)$, if $\mu$ was generated by a single policy $\pi$, and $\Pi = \{\pi\}$ was a singleton set, then we would have $C(\Pi)=1$, which is the smallest possible value. However, if $\Pi$ contained policies far from $\pi$, the value could be large, potentially infinite if the support of $\Pi$ is not contained in $\pi$. Now, we bound the performance of approximate distribution-constrained Q-iteration:
\begin{theorem}
\label{thm:avi_bound}
Suppose we run approximate distribution-constrained value iteration with a set constrained backup $\TPi$. Assume that $\delta(s,a) \ge \max_k |Q_k(s,a) - \TPi Q_{k-1}(s,a)|$ bounds the Bellman error. Then,
\[\lim_{k \to \infty} \expec_{\rhoinit}[|V^{\pi_k}(s) - V^*(s)|] \le
\frac{\gamma}{(1-\gamma)^2}\left[ C(\Pi)\expec_\mu[\max_{\pi \in \Pi} \expec_{\pi}[\projerr(s,a)]] + \frac{1-\gamma}{\gamma}\alpha(\Pi) \right]
\]
\end{theorem}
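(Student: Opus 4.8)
The plan is to follow the error-propagation strategy for approximate value iteration, adapting the analysis of \citet{munos2005erroravi} to account for the additional bias introduced by backing up with $\TPi$ rather than the unconstrained operator $\backup$. The starting point is a one-step decomposition of the total error. Since $\backup Q^* = Q^*$, I would write at each iteration
\begin{align*}
Q^* - Q_k = \underbrace{(\backup Q^* - \TPi Q^*)}_{\text{bias, } \le\, \alpha(\Pi)} + \underbrace{(\TPi Q^* - \TPi Q_{k-1})}_{\text{contraction}} + \underbrace{(\TPi Q_{k-1} - Q_k)}_{\text{Bellman error, } \le\, \projerr},
\end{align*}
together with the matching reverse inequality so as to control $|Q^* - Q_k|$. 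The middle term is where the maximization over $\Pi$ must be handled carefully: I would bound it pointwise by $\gamma P^{\pi} |Q^* - Q_{k-1}|$ for a suitable $\pi \in \Pi$ (the policy attaining one of the two inner maxima), so that the transition operators appearing in the recursion are exactly products of policies drawn from $\Pi$, which is precisely the form required by Assumption~\ref{assumption:conc}.

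Next I would unroll this recursion. Iterating the pointwise bound expresses $\lim_{k\to\infty}|Q^* - Q_k|$ as a discounted sum in which each stage contributes a propagated Bellman-error term of the form $\gamma^{j}\, P^{\pi_1}\cdots P^{\pi_j}\,\projerr$ and a propagated bias term of the form $\gamma^{j}\alpha(\Pi)$. Summing the bias contributions geometrically yields $\alpha(\Pi)/(1-\gamma)$, which after factoring out the leading $\gamma/(1-\gamma)^2$ is exactly the stated $\frac{1-\gamma}{\gamma}\alpha(\Pi)$ term. I would then translate this bound on the $Q$-error into a bound on the suboptimality $V^* - V^{\pi_k}$ of the induced greedy policy $\pi_k$ via a standard performance-difference argument, which introduces the additional factor of $\gamma/(1-\gamma)$ and produces a double sum over the iteration index and the propagation depth.

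Finally, I would integrate the pointwise bound against $\rhoinit$: each term $\rhoinit P^{\pi_1}\cdots P^{\pi_k}$ is replaced by $c(k)\mu$ using concentrability, the inner quantity becomes $\expec_\mu[\max_{\pi \in \Pi}\expec_\pi[\projerr(s,a)]]$, and collecting the coefficients of $c(k)$ reconstitutes the weighted series $(1-\gamma)^2\sum_{k} k\gamma^{k-1}c(k) = C(\Pi)$, delivering the $C(\Pi)$ factor.

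The main obstacle I anticipate is bookkeeping in the contraction step: keeping the directions of the inequalities consistent through the $\max_{\pi \in \Pi}$ so that \emph{only} products of $\Pi$-policies (never $\pi^*$, which may lie outside $\Pi$) appear in the propagated operators, since otherwise Assumption~\ref{assumption:conc} cannot be applied, while simultaneously isolating the $\alpha(\Pi)$ bias cleanly. The second delicate point is matching constants: verifying that the double summation over iteration and depth collapses to exactly the $k\gamma^{k-1}$ weighting built into the definition of $C(\Pi)$, so that the resulting prefactor is precisely $\gamma/(1-\gamma)^2$ rather than merely of that order.
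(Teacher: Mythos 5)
Your overall strategy---re-deriving a Munos-style error-propagation bound for the constrained operator, keeping every propagated transition operator inside $\Pi$, and collapsing the iteration-by-depth double sum into the $k\gamma^{k-1}$ weights defining $C(\Pi)$---is sound for the Bellman-error term, and your care about never letting $\pi^*$ appear is exactly right: both one-sided maximizers of $\TPi Q_1 - \TPi Q_2$ lie in $\Pi$, and the resolvent in the performance-difference step involves only $P^{\pi_k}$ with $\pi_k \in \Pi$, so Assumption~\ref{assumption:conc} applies throughout. The genuine gap is in the $\alpha(\Pi)$ accounting, precisely the point you flag as ``delicate'' but do not resolve. Because you unroll the recursion against $Q^*$, the bias $\backup Q^* - \TPi Q^*$ is injected at every unrolling step, summing to $\alpha(\Pi)/(1-\gamma)$ \emph{inside} the Q-error bound; when you then push that bound through the performance-difference step, it picks up the additional $\gamma/(1-\gamma)$ resolvent factor (plus a second bias term from greediness being restricted to $\Pi$), so the suboptimality term in your final bound is of order $\gamma\,\alpha(\Pi)/(1-\gamma)^2$, not the stated $\frac{1}{1-\gamma}\alpha(\Pi)$ (i.e., $\frac{1-\gamma}{\gamma}\alpha(\Pi)$ inside the bracket). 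Your claim that the geometric sum $\alpha(\Pi)/(1-\gamma)$ ``after factoring out the leading $\gamma/(1-\gamma)^2$'' matches the stated term is internally inconsistent: you cannot both sum the bias geometrically in the Q-recursion and then multiply the whole Q-error bound by the performance-difference factor.

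The paper avoids this by never letting the bias enter the propagation machinery at all. It introduces $\VPi$, the fixed point of $\TPi$, and splits $\expec_{\rhoinit}[|V_k - V^*|] \le L_1 + L_2$, where $L_1 = \expec_{\rhoinit}[|V_k - \VPi|]$ and $L_2 = \expec_{\rhoinit}[|\VPi - V^*|]$. For $L_1$, since $\TPi$ is the exact Bellman optimality operator of a modified MDP whose action set is $\Pi$ (Appendix~\ref{app:constrained_backup}), the error-propagation theorem of Munos applies verbatim with $\VPi$ playing the role of the optimal value; no $\alpha(\Pi)$ appears there, and the concentrability coefficient only ever sees products of $\Pi$-policies. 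For $L_2$, a three-line $\linfnorm$ contraction argument, $\norminf{\VPi - V^*} \le \gamma \norminf{\VPi - V^*} + \alpha(\Pi)$, gives $\norminf{\VPi - V^*} \le \alpha(\Pi)/(1-\gamma)$, which is where the clean $\frac{1-\gamma}{\gamma}\alpha(\Pi)$ constant comes from. To salvage your direct approach, anchor the recursion at the fixed point $Q^\Pi$ rather than at $Q^*$: then the bias term vanishes from the unrolling entirely and reappears exactly once, additively, via a final triangle inequality---which is the paper's decomposition. (One reassurance on your other worry: the double-sum collapse to $k\gamma^{k-1}$ does work out, but both routes genuinely produce a factor $2\gamma/(1-\gamma)^2$ on the $C(\Pi)$ term, as in Munos's Theorem 1; the paper's own proof of $L_1$ carries this factor of $2$ even though it is dropped in the theorem statement, so that particular constant is not yours to fix.)
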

\begin{proof} See Appendix~\ref{app:error_prop}, Theorem~\ref{thm:avi_bound_proof} \end{proof}
This bound formalizes the tradeoff between keeping policies chosen during backups close to the data (captured by $C(\Pi)$) and keeping the set $\Pi$ large enough to capture well-performing policies (captured by $\alpha(\Pi)$). When we expand the set of policies $\Pi$, we are increasing $C(\Pi)$ but decreasing $\alpha(\Pi)$. An example of this tradeoff, and how a careful choice of $\Pi$ can yield superior results, is given in a tabular gridworld example in Fig.~\ref{fig:gridworld}, where we visualize errors accumulated during distribution-constrained Q-iteration for different choices of $\Pi$. 

Finally, we motivate the use of support sets to construct $\Pi$. We are interested in the case where $\Pi_\epsilon = \{ \pi ~|~ \pi( a | s) = 0 \text{ whenever } \beta( a | s) < \epsilon \}$, where $\beta$ is the behavior policy (i.e., $\Pi$ is the set of policies that have support in the probable regions of the behavior policy). Defining $\Pi_\epsilon$ in this way allows us 
to bound the concentrability coefficient:
\begin{theorem}
\label{thm:conc_coeff_bound}
Assume the data distribution $\mu$ is generated by a behavior policy $\beta$. 
Let $\mu(s)$ be the marginal state distribution under the data distribution. Define $\Pieps = \{ \pi ~|~ \pi( a | s) = 0 \text{ whenever } \beta( a | s) < \epsilon \}$ and let $\mu_\Pieps$ be the highest discounted marginal state distribution starting from the initial state distribution $\rho$ and following policies $\pi \in \Pieps$ at each time step thereafter. Then, there exists a concentrability coefficient $C(\Pieps)$ which is bounded:
\[
C(\Pi_\epsilon) \leq C(\beta) \cdot \Big(1 + \frac{\gamma}{(1 - \gamma) f(\epsilon)} (1 - \epsilon)\Big)
\]
where $f(\epsilon) \defeq \min_{s \in \mathcal{S}, \mu_\Pieps(s) > 0} [\mu(s)] > 0$.
\end{theorem}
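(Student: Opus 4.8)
The plan is to bound the coefficients $c(k)$ that define $C(\Pieps)$ by comparing any $k$-step trajectory generated by policies $\pi_1,\dots,\pi_k\in\Pieps$ against the trajectory generated by the behavior policy $\beta$, and then to feed the resulting per-$k$ bound into $C(\Pieps)=(1-\gamma)^2\sum_{k}k\gamma^{k-1}c(k)$. The support constraint defining $\Pieps$ is exactly what lets me control the per-step discrepancy between any admissible $\pi$ and $\beta$, and each ingredient of the target bound ($1-\epsilon$, $C(\beta)$, $f(\epsilon)$, $\tfrac{\gamma}{1-\gamma}$) enters at a different stage.

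First I would prove a pointwise total-variation bound: for any $\pi\in\Pieps$ and any $s$, $D_{TV}(\pi(\cdot|s),\beta(\cdot|s))\le 1-\epsilon$. Writing $D_{TV}=1-\sum_a\min(\pi(a|s),\beta(a|s))$, it suffices to show the overlap satisfies $\sum_a\min(\pi,\beta)\ge\epsilon$: if some action has $\pi(a|s)>\beta(a|s)$ then that action lies in the support of $\pi$, so $\beta(a|s)\ge\epsilon$ and the overlap already exceeds $\epsilon$; otherwise $\pi\le\beta$ pointwise and the overlap equals $\sum_a\pi(a|s)=1$. This is where the factor $(1-\epsilon)$ originates (I am reading $\beta(\cdot|s)$ as a PMF; a continuous-density version would need the analogous statement). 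I then lift this to one step of state evolution: for any state distribution $d$, the kernel identity $dP^\pi(s')-dP^\beta(s')=\sum_s d(s)\sum_a(\pi-\beta)(a|s)P(s'|s,a)$ together with $\sum_{s'}P(s'|s,a)=1$ gives $D_{TV}(dP^\pi,dP^\beta)\le\mathbb{E}_{s\sim d}[D_{TV}(\pi(\cdot|s),\beta(\cdot|s))]\le 1-\epsilon$, so substituting a single $\pi$ for $\beta$ moves the state marginal by at most $1-\epsilon$ in total variation.

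Next I would telescope over the horizon and re-express the discrepancy against $\mu$. Using $P^{\pi_j}=P^\beta+(P^{\pi_j}-P^\beta)$ I expand $\rhoinit P^{\pi_1}\cdots P^{\pi_k}-\rhoinit(P^\beta)^k=\sum_{j=1}^{k}\big(\rhoinit P^{\pi_1}\cdots P^{\pi_{j-1}}\big)(P^{\pi_j}-P^\beta)(P^\beta)^{k-j}$. Each summand is a zero-mass signed measure whose total-variation mass is $\le 1-\epsilon$ by the one-step bound (stochastic kernels do not increase total variation), and whose positive part is then propagated forward by powers of $P^\beta$. I would bound the pointwise excess over $p^k_\beta(s)$ by controlling the $(P^\beta)^{k-j}$ tails with the behavior concentrability and converting the remaining absolute mass into a density ratio by dividing by $f(\epsilon)=\min_{s:\mu_{\Pieps}(s)>0}\mu(s)$, which is legitimate precisely on the $\Pieps$-reachable support. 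This should give a bound of the form $c(k)\le c_\beta(k)+\tfrac{1-\epsilon}{f(\epsilon)}\times(\text{propagated }\beta\text{-tails})$; substituting into $(1-\gamma)^2\sum_k k\gamma^{k-1}c(k)$ and collapsing the geometric sums should produce $C(\beta)$ multiplied by the correction $1+\frac{\gamma}{(1-\gamma)f(\epsilon)}(1-\epsilon)$, where the surviving $\frac{\gamma}{1-\gamma}$ is the discounted count of the time steps at which a deviation can be introduced.

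The hard part is this last step. A naive telescoping bounds the accumulated deviation by $k(1-\epsilon)$, which after weighting by $k\gamma^{k-1}$ yields a $\sum_k k^2\gamma^{k-1}$ blow-up and the wrong (non-$C(\beta)$-proportional) constant. The key is to \emph{not} let the per-step deviations compound: I must attribute each one-step total-variation discrepancy to a single ``deviation time'' $j$, control the subsequent $\beta$-propagation through the behavior concentrability $c_\beta(\cdot)$, and thereby keep exactly one geometric factor $\frac{\gamma}{1-\gamma}$ (counting the deviation time) while the entire correction rides on top of $C(\beta)$. Making the signed-measure propagation and the normalization by $f(\epsilon)$ rigorous—in particular ensuring the propagated excess mass stays on the support where $\mu\ge f(\epsilon)$—is the crux of the argument.
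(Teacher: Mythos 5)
Your first two steps are sound and match the paper's own starting point: the per-state bound $D_{TV}(\pi(\cdot|s),\beta(\cdot|s)) \le 1-\epsilon$ (your overlap argument is correct, and more explicit than the paper, which simply asserts this fact), and its lift to one step of state evolution. The gap is in the last step, which you yourself flag as the crux. Your plan is to control each propagated deviation term $(\rhoinit P^{\pi_1}\cdots P^{\pi_{j-1}})(P^{\pi_j}-P^{\beta})(P^{\beta})^{k-j}$ ``through the behavior concentrability $c_\beta(\cdot)$.'' But Assumption 4.1 only provides coefficients for trajectories started at $\rhoinit$; the positive part of the deviation term is an arbitrary sub-probability measure (mass $\le 1-\epsilon$) concentrated wherever $\pi_1,\dots,\pi_j$ have carried the state, and nothing in the assumptions lets you compare its subsequent $\beta$-propagation against $\mu$ with a factor $c_\beta(k-j)$. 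Without such a ``restart'' concentrability assumption that step cannot be executed, and even granting it, the double sum $\sum_k k\gamma^{k-1}\sum_{j\le k}(\cdot)$ still produces a $k^2$-type series, not the claimed form. What you \emph{can} execute rigorously is the naive bound you dismiss: pointwise each propagated deviation is at most its mass $1-\epsilon$, so $c(k) \le c_\beta(k) + k(1-\epsilon)/f(\epsilon)$ on the relevant support. Contrary to your worry this does not blow up, since $(1-\gamma)^2\sum_k k^2\gamma^{k-1} = (1+\gamma)/(1-\gamma)$, but it yields $C(\Pieps) \le C(\beta) + \frac{(1+\gamma)(1-\epsilon)}{(1-\gamma)f(\epsilon)}$: an additive correction, incomparable to the stated multiplicative bound and weaker than it when $C(\beta)$ is near its minimum value of $1$.

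For comparison, the paper never attempts the per-$k$ series computation. After the same TV bounds, it invokes the TRPO-style state-marginal lemma (Schulman et al., Appendix B) on the aggregated discounted marginal --- this is your telescoping, but summed over the horizon with geometric weights, so each deviation time is counted once with weight $\gamma^j$ and the total is $\frac{\gamma}{1-\gamma}(1-\epsilon)$, avoiding the $k^2$ issue --- giving $D_{TV}(\mu_\beta||\mu_\Pieps) \le \frac{\gamma}{1-\gamma}(1-\epsilon)$. It then divides by $\mu(s)\ge f(\epsilon)$ on the support of $\mu_\Pieps$ to get $\sup_s \mu_\Pieps(s)/\mu_\beta(s) \le 1+\frac{\gamma(1-\epsilon)}{(1-\gamma)f(\epsilon)}$, and finally \emph{identifies} $C(\Pieps)/C(\beta)$ with this supremum. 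That last move is definitional --- a reinterpretation of the concentrability coefficient as a ratio of discounted marginals --- rather than a consequence of the series formula $(1-\gamma)^2\sum_k k\gamma^{k-1}c(k)$. So the multiplicative-in-$C(\beta)$ form you were struggling to recover is not obtained in the paper by the bookkeeping you attempt; it is obtained by this looser identification. To fix your write-up, either adopt the paper's route (aggregate the telescoping into the discounted marginal before bounding, then accept the ratio interpretation), or keep your series-level argument and state the additive bound it actually proves, which serves the same qualitative purpose in Theorem 4.1.
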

\vspace{-10pt}
\begin{proof} See Appendix~\ref{app:error_prop}, Theorem~\ref{thm:conc_coeff_proof} \end{proof}
\vspace{-10pt}
Qualitatively, $f(\epsilon)$ is the minimum discounted visitation marginal of a state under the behaviour policy if only actions which are more than $\epsilon$ likely are executed in the environment. Thus, using support sets gives us a single lever, $\epsilon$, which simultaneously trades off the value of $C(\Pi)$ and $\alpha(\Pi)$. Not only can we provide theoretical guarantees, we will see in our experiments (Sec.~\ref{sec:experiments}) that constructing $\Pi$ in this way provides a simple and effective method for implementing distribution-constrained algorithms. 

Intuitively, this means we can prevent an increase in overall error in the Q-estimate by selecting policies supported on the support of the training action distribution, which would ensure roughly bounded projection error $\delta_k(s, a)$ while reducing the suboptimality bias, potentially by a large amount. Bounded error $\delta_k(s, a)$ on the support set of the training distribution is a reasonable assumption when using highly expressive function approximators, such as deep networks, especially if we are willing to reweight the transition set~\cite{Schaul2016PrioritizedER,fu2019diagnosing}. We further elaborate on this point in Appendix~\ref{app:bearql-more}.

\begin{figure}
    \centering
    \vspace{-0.1in}
    \includegraphics[width=0.9\textwidth]{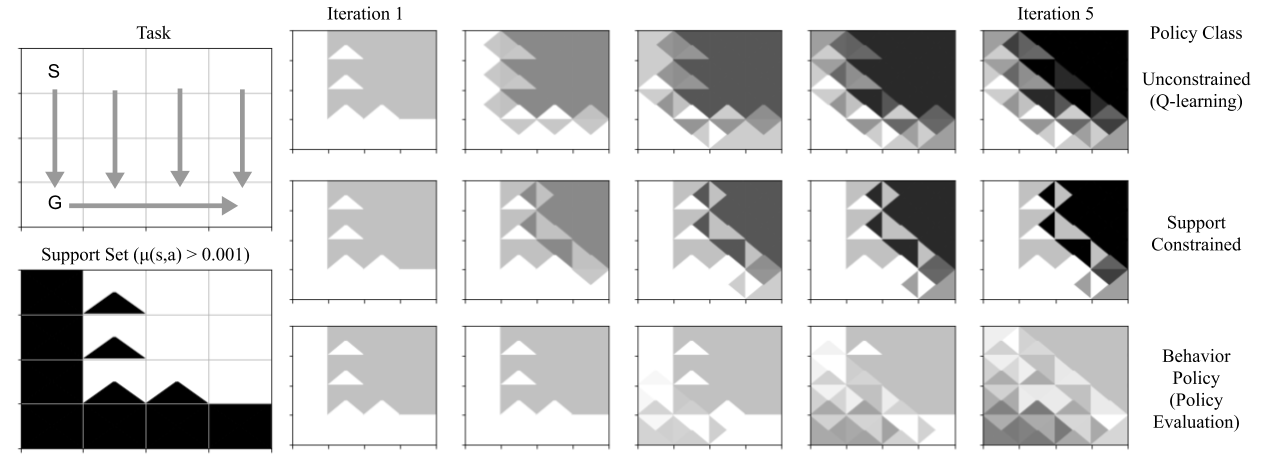}
    \caption{ \footnotesize Visualized error propagation in Q-learning for various choices of the constraint set $\Pi$:
    unconstrained (top row)
    distribution-constrained (middle),
    and constrained to the behaviour policy (policy-evaluation, bottom). Triangles represent Q-values for actions that move in different directions. The task (left) is to reach the bottom-left corner (G) from the top-left (S), but the behaviour policy (visualized as arrows in the task image, support state-action pairs are shown in black on the support set image) travels to the bottom-right with a small amount of $\epsilon$-greedy exploration. Dark values indicate high error, and light values indicate low error. Standard backups propagate large errors from the low-support regions into the high-support regions, leading to high error. Policy evaluation reduces error propagation from low-support regions, but introduces significant suboptimality bias, as the data policy is not optimal. A carefully chosen distribution-constrained backup strikes a balance between these two extremes, by confining error propagation in the low-support region while introducing minimal suboptimality bias.}
    \label{fig:gridworld}
    \vspace{-0.1in}
\end{figure}

\section{Bootstrapping Error Accumulation Reduction (BEAR)}
\label{sec:bear}
\vspace{-0.1in}
We now propose a practical actor-critic algorithm (built on the framework of TD3~\cite{fujimoto18addressing} or SAC~\cite{haarnoja2018sac}) that uses distribution-constrained backups to reduce accumulation of bootstrapping error. The key insight is that we can search for a policy with the same support as the training distribution, while preventing accidental error accumulation.
Our algorithm has two main components. Analogous to BCQ~\citep{fujimoto18addressing}, we use $K$ Q-functions and use the minimum Q-value for policy improvement, and design a constraint which will be used for searching over the set of policies $\Pieps$, which share the same support as the behaviour policy. Both of these components will appear as modifications of the policy improvement step in actor-critic style algorithms. We also note that policy improvement can be performed with the mean of the K Q-functions, and we found that this scheme works as good in our experiments.

We denote the set of Q-functions as: $\hat{Q}_1, \cdots, \hat{Q}_K$.
Then, the policy is updated to maximize the conservative estimate of the Q-values within $\Pieps$: 
$$ \pi_\phi(s) := \max_{\pi \in \Pieps} \expec_{a \sim \pi(\cdot|s)} \left[\min_{j=1,..,K} \hat{Q}_j(s, a)\right] $$
\vspace{-5pt}

In practice, the behaviour policy $\beta$ is unknown, so we need an approximate way to constrain $\pi$ to $\Pi$. We define a differentiable constraint that approximately constrains $\pi$ to $\Pi$, and then approximately solve the constrained optimization problem via dual gradient descent.  We use the sampled version of maximum mean discrepancy (MMD)~\cite{gretton2012kernel}
between the unknown behaviour policy $\beta$ and the actor $\pi$ because it can be estimated based solely on samples from the distributions. Given samples $x_1, \cdots, x_n \sim P$ and $y_1, \cdots, y_m \sim Q$, the sampled MMD between $P$ and $Q$ is given by:\\
$$\operatorname{MMD}^2(\{x_1, \cdots, x_n\}, \{y_1, \cdots, y_m\}) = \frac{1}{n^2} \sum_{i, i'} k(x_i, x_{i'}) - \frac{2}{nm} \sum_{i, j} k(x_i, y_j) + \frac{1}{m^2} \sum_{j, j'} k(y_j, y_{j'}).
$$
Here, $k(\cdot, \cdot)$ is any universal kernel. In our experiments, we find both Laplacian and Gaussian kernels work well.
The expression for MMD does not involve the density of either distribution and it can be optimized directly through samples. Empirically we find that, in the low-intermediate sample regime, the sampled MMD between $P$ and $Q$ is similar to the MMD between a uniform distribution over $P$'s support and $Q$, which makes MMD roughly suited for constraining distributions to a given support set. (See Appendix~\ref{app:mmd} for numerical simulations justifying this approach).

Putting everything together, the optimization problem in the policy improvement step is
\vspace{-5pt}
\begin{multline}
    \label{eqn:policy_update}
   \pi_\phi := \max_{\pi \in \Delta_{|S|}} \expec_{s \sim \mathcal{D}} \expec_{a \sim \pi(\cdot|s)} \left[\min_{j=1,..,K} \hat{Q}_j(s, a)\right] 
   \text{~~s.t.~~} \mathbb{E}_{s \sim \mathcal{D}} [\operatorname{MMD}(\mathcal{D}(s), \pi(\cdot|s))] \leq \varepsilon \quad
\end{multline}
where $\varepsilon$ is an approximately chosen threshold. We choose a threshold of $\varepsilon=0.05$ in our experiments. The algorithm is summarized in Algorithm~\ref{algo:bear_ql}. 

{How does BEAR connect with distribution-constrained backups described in Section 4.1? Step 5 of the algorithm restricts $\pi_\phi$ to lie in the support of $\beta$. This insight is formally justified in Theorems 4.1 \& 4.2 ($C(\Pi_\varepsilon)$ is bounded). Computing distribution-constrained backup exactly by maximizing over $\pi \in \Pi_\varepsilon$ is intractable in practice. As an approximation, we sample Dirac policies in the support of $\beta$ (Alg 1, Line 5) and perform empirical maximization to compute the backup. As the maximization is performed over a \textit{narrower} set of Dirac policies ($\{ \delta_{a_i} \} \subseteq \Pi_\varepsilon$), the bound in Theorem 4.1 still holds. Empirically, we show in Section~\ref{sec:experiments} that this approximation is sufficient to outperform previous methods. This connection is briefly discussed in Appendix C.2.}

\vspace{-5pt}
\begin{algorithm}[H]
\small
\caption{BEAR Q-Learning (BEAR-QL)}
\label{alg:q_learning}
\begin{algorithmic}[1]
    \INPUT: Dataset $\mathcal{D}$, target network update rate $\tau$, mini-batch size $N$, sampled actions for MMD $n$, minimum $\lambda$
    \STATE Initialize Q-ensemble $\{Q_{\theta_i} \}_{i=1}^{K}$, actor $\pi_{\phi}$, Lagrange multiplier $\alpha$, target networks $\{ Q_{\theta'_i} \}_{i=1}^K$, and a target actor $\pi_{\phi'}$, with $\phi' \leftarrow \phi, \theta'_i \leftarrow \theta_i$
    \FOR{$t$ in \{1, \dots, N\}}
        \STATE Sample mini-batch of transitions $(s, a, r, s') \sim \mathcal{D}$\\
        \textbf{Q-update:}
            \STATE Sample $p$ action samples, $\{a_i \sim \pi_{\phi'}(\cdot|s')\}_{i=1}^p$
            \STATE Define $y(s, a) := \max_{a_i} [ \lambda \min_{j=1,..,K} Q_{\theta'_j}(s', a_i) + (1 - \lambda) \max_{j=1,..,K} Q_{\theta'_j}(s', a_i)]$
            \STATE $\forall i, \theta_i \leftarrow \arg \min_{\theta_i} (Q_{\theta_i}(s, a) - (r + \gamma y(s, a)))^2$\\
        \textbf{Policy-update:}
        \STATE Sample actions $\{ \hat{a}_i \sim \pi_{\phi}(\cdot | s) \}_{i=1}^{m}$ and $\{ a_j \sim \mathcal{D}(s)\}_{j=1}^{n}$, $n$ preferably an intermediate integer(1-10)
        \STATE Update $\phi$, $\alpha$ by minimizing Equation~\ref{eqn:policy_update} by using dual gradient descent with Lagrange multiplier $\alpha$
        \STATE \textbf{Update Target Networks: } $\theta'_i \leftarrow \tau \theta_i + (1 - \tau)\theta'_i$; $\phi' \leftarrow \tau \phi + (1 -\tau) \phi'$ 
    \ENDFOR
\end{algorithmic}
\label{algo:bear_ql}
\end{algorithm}
\vspace{-0.1in}
In summary, the actor is updated towards maximizing the Q-function while still being constrained to remain in the valid search space defined by $\Pieps$. The Q-function uses actions sampled from the actor to then perform distribution-constrained Q-learning, over a reduced set of policies. {At test time, we sample $p$ actions from $\pi_\phi(s)$ and the Q-value maximizing action out of these is executed in the environment.}  
Implementation and other details are present in Appendix \ref{app:additional_details}.

\section{Experiments}
\vspace{-0.1in}
\label{sec:experiments}
In our experiments, we study how BEAR performs when learning from static off-policy data on a variety of continuous control benchmark tasks. We evaluate our algorithm in three settings: when the dataset $\dataset$ is generated by \textbf{(1)} a completely random behaviour policy, \textbf{(2)} a partially trained, medium scoring policy, and \textbf{(3)} an optimal policy. Condition \textbf{(2)} is of particular interest, as it captures many common use-cases in practice, such as learning from imperfect demonstration data (e.g., of the sort that are commonly available for autonomous driving~\cite{DBLP:conf/iclr/GaoXLYLD18}), or reusing previously collected experience during off-policy RL. We compare our method to several prior methods: a baseline actor-critic algorithm (TD3), the BCQ  algorithm~\citep{fujimoto2018off}, which aims to address a similar problem, as discussed in Section~\ref{sec:Problem Description}, KL-control~\citep{jacques19way} (which solves a KL-penalized RL problem similarly to maximum entropy RL), a static version of DQfD~\citep{hester2018dqfd} (where a constraint to upweight Q-values of state-action pairs observed in the dataset is added as an auxiliary loss on top a regular actor-critic algorithm), and a behaviour cloning (BC) baseline, which simply imitates the data distribution. This serves to measure whether each method actually performs effective RL, or simply copies the data. We report the average evaluation return over 5 seeds of the policy given by the learned algorithm, in the form of a learning curve as a function of number of gradient steps taken by the algorithm. These samples are only collected for evaluation, and are not used for training.
\vspace{-0.1in}

\subsection{Performance on Medium-Quality Data}

We first discuss the evaluation of condition with ``mediocre'' data \textbf{(2)}, as this condition resembles the settings where we expect training on offline data to be most useful. We collected one million transitions from a partially trained policy, so as to simulate imperfect demonstration data or data from a mediocre prior policy.
In this scenario, we found that BEAR-QL consistently outperforms both BCQ~\cite{fujimoto2018off} and a na\"ive off-policy RL baseline (TD3) by large margins, as shown in Figure~\ref{fig:mediocre}. This scenario is the most relevant from an application point of view, as access to optimal data may not be feasible, and random data might have inadequate exploration to efficient learn a good policy. We also evaluate the accuracy with which the learned Q-functions predict actual policy returns. These trends are provided in Appendix~\ref{app:q_vs_mc}. Note that the performance of BCQ often tracks the performance of the BC baseline, suggesting that BCQ primarily imitates the data. Our KL-control baseline uses automatic temperature tuning~\citep{haarnoja2018sac}. We find that KL-control usually performs similar or worse to BC, whereas DQfD tends to diverge often due to cumulative error due to OOD actions and often exhibits a huge variance across different runs (for example, HalfCheetah-v2 environment). 

\begin{figure*}[t!]
    \centering
    \vspace{-0.05in}
    \begin{subfigure}[t]{0.23\textwidth}
        \centering
        \includegraphics[width=0.99\linewidth]{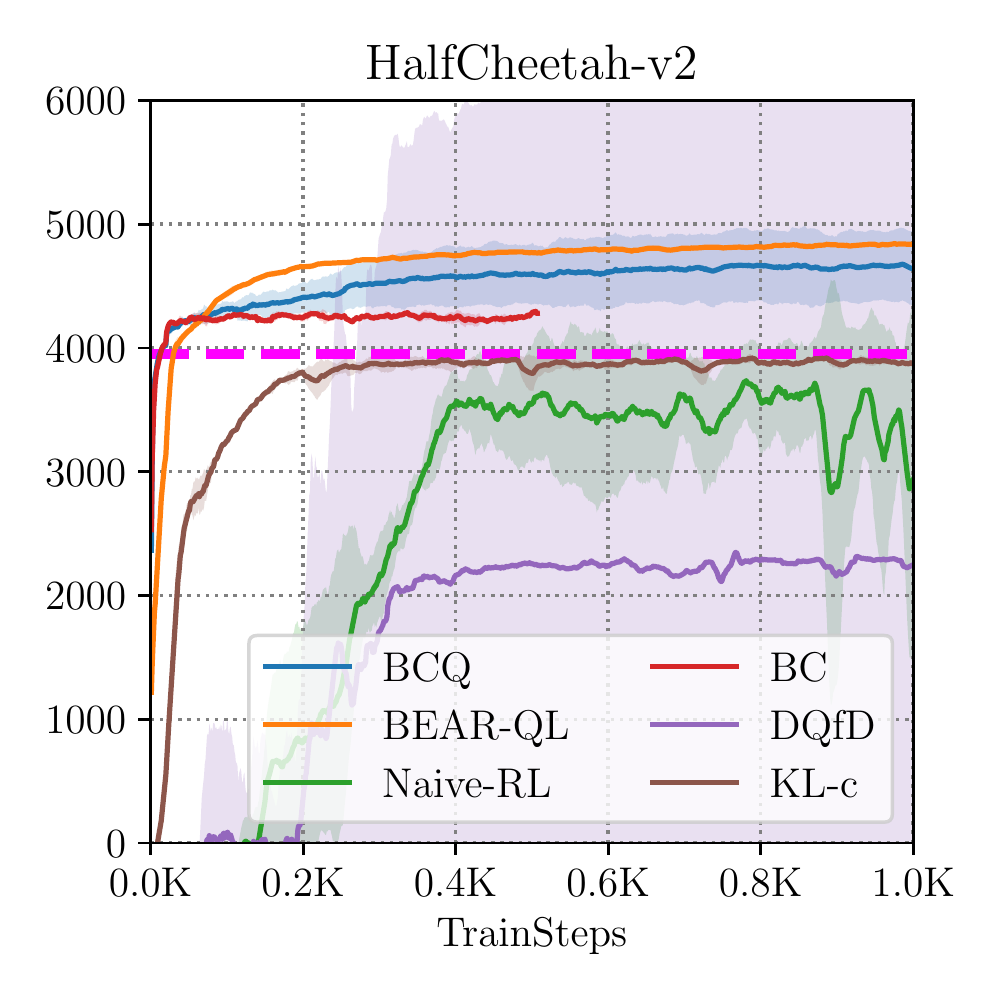}
    \end{subfigure}
    \begin{subfigure}[t]{0.23\textwidth}
        \centering
        \includegraphics[width=0.99\linewidth]{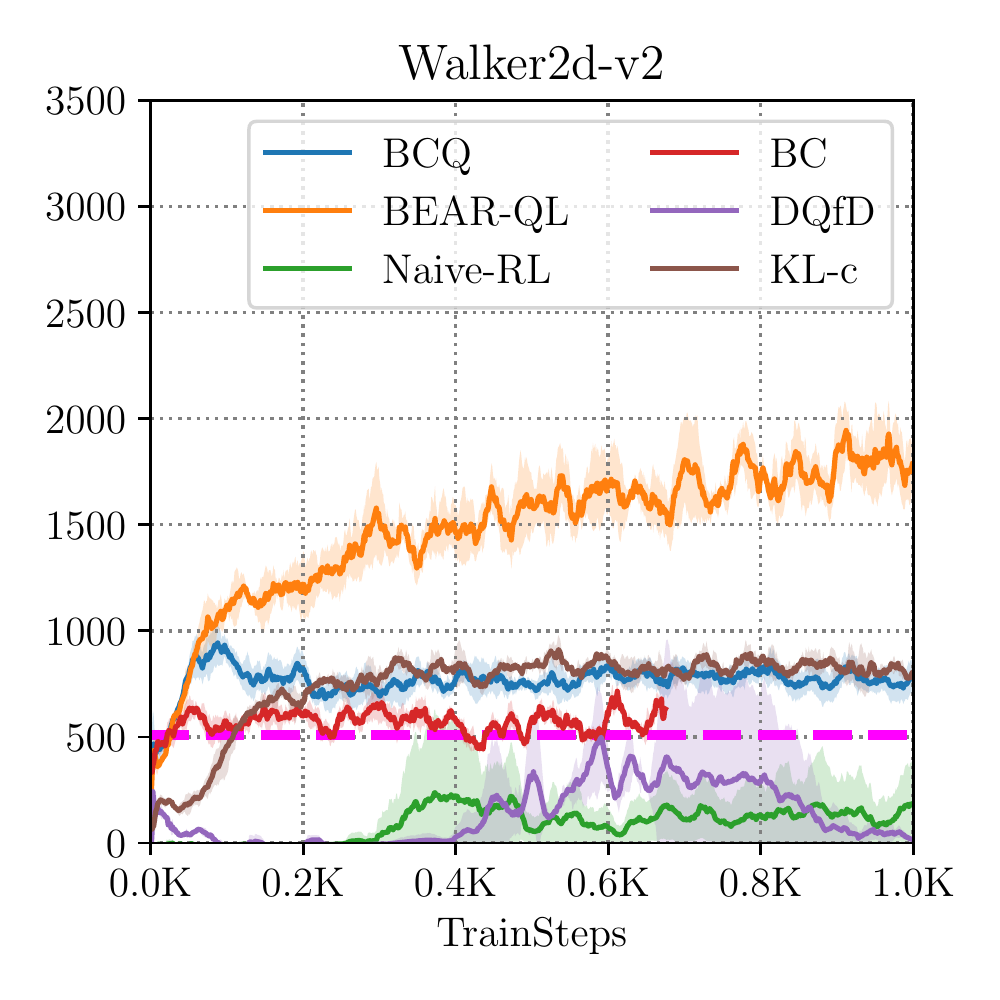}
    \end{subfigure}
    ~
    \begin{subfigure}[t]{0.23\textwidth}
        \centering
        \includegraphics[width=0.99\linewidth]{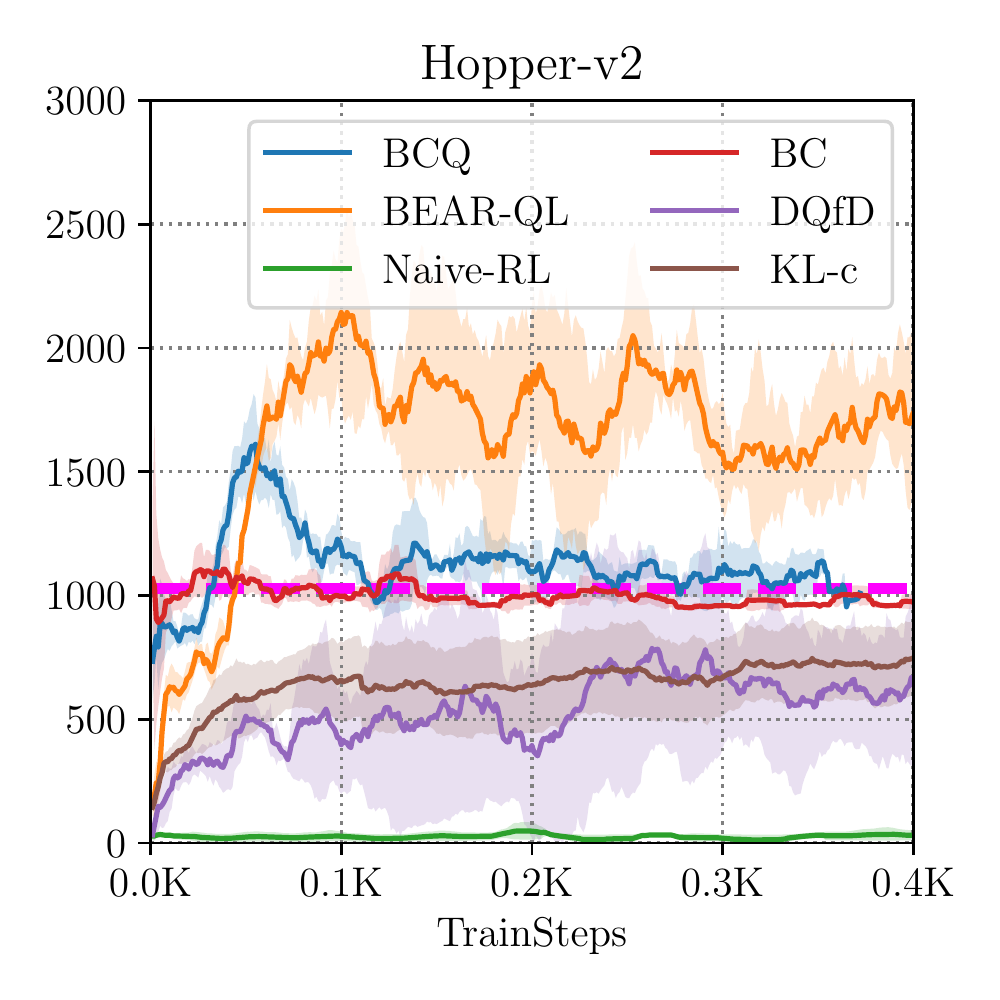}
    \end{subfigure}
    ~
    \begin{subfigure}[t]{0.23\textwidth}
        \centering
        \includegraphics[width=0.99\linewidth]{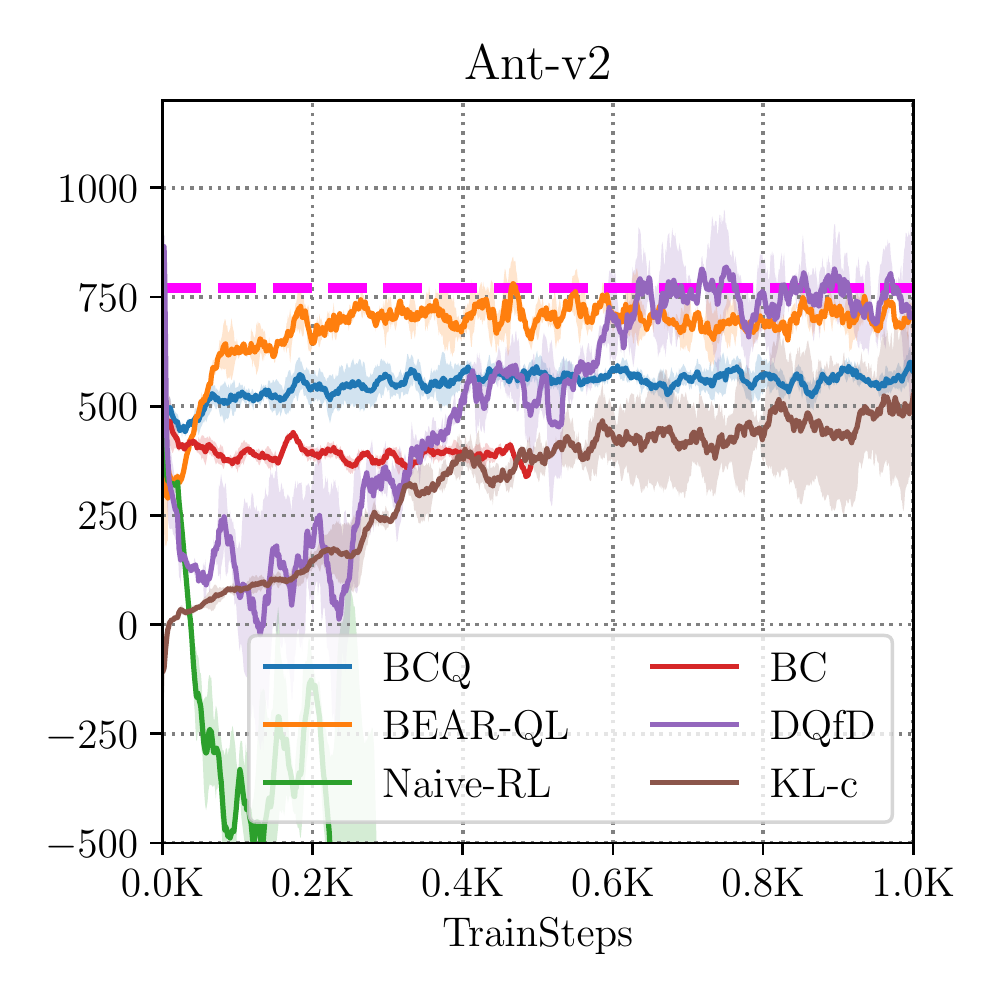}
    \end{subfigure}
    \caption{ \footnotesize Average performance of BEAR-QL, BCQ, Na\"ive RL and BC on medium-quality data averaged over 5 seeds. BEAR-QL outperforms both BCQ and Na\"ive RL. Average return over the training data is indicated by the magenta line. One step on the x-axis corresponds to 1000 gradient steps.}
    \label{fig:mediocre}
    \vspace{-0.1in}
\end{figure*}


\vspace{-5pt}
\subsection{Performance on Random and Optimal Datasets}
In Figure~\ref{fig:optimal_random}, we show the performance of each method when trained on data from a random policy (top) and a near-optimal policy (bottom). In both cases, our method BEAR achieves good results, consistently exceeding the average dataset return on random data, and matching the optimal policy return on optimal data. Na\"{i}ve RL also often does well on random data. For a random data policy, all actions are in-distribution, since they all have equal probability. This is consistent with our hypothesis that OOD actions are one of the main sources of error in off-policy learning on static datasets. The prior BCQ method~\cite{fujimoto2018off} performs well on optimal data but performs poorly on random data, where the constraint is too strict. These results show that BEAR-QL is robust to the dataset composition, and can learn consistently in a variety of settings. We find that KL-control and DQfD can be unstable in these settings.  

{Finally, in Figure \ref{fig:humanoid}, we  show that BEAR outperforms other considered prior methods in the challenging Humanoid-v2 environment as well, in two cases -- Medium-quality data and random data.}

\begin{wrapfigure}{r}{0.50\textwidth}
\vspace{-20pt}
        \includegraphics[width=0.49\linewidth]{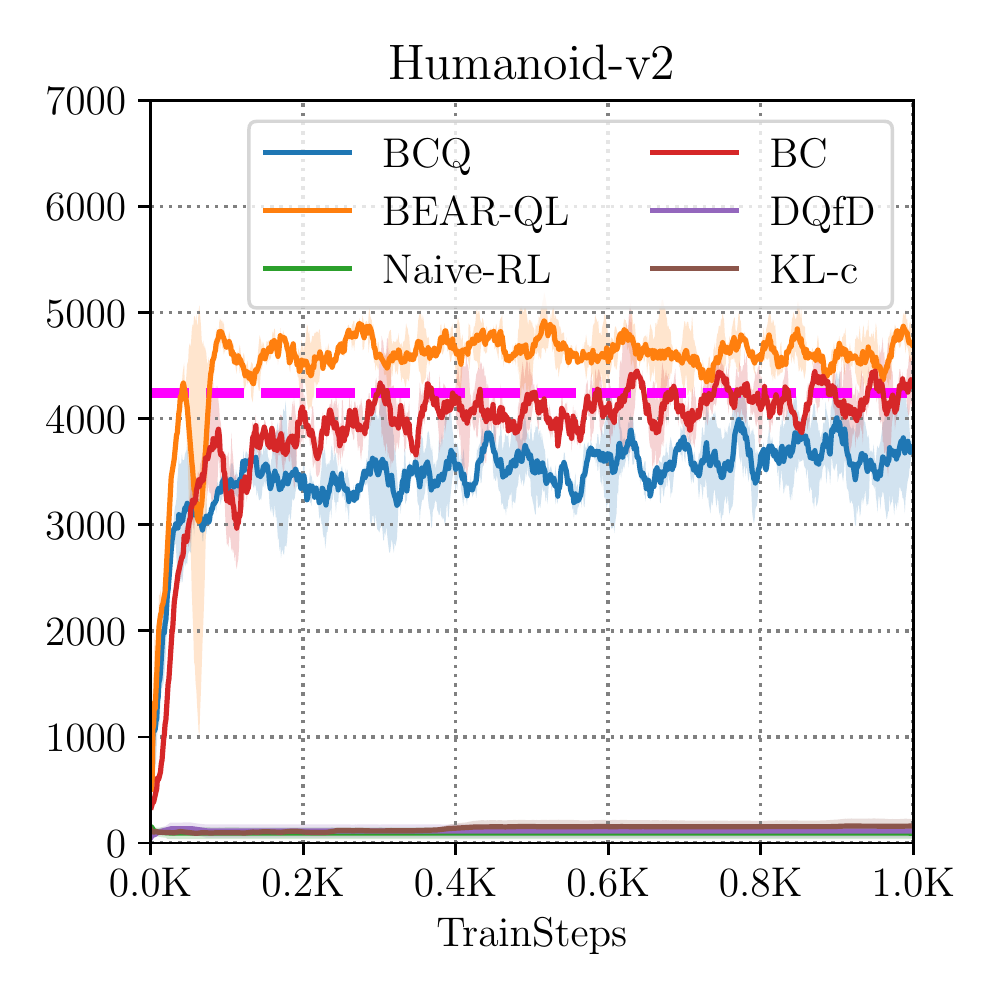}
       ~
        \includegraphics[width=0.48\linewidth]{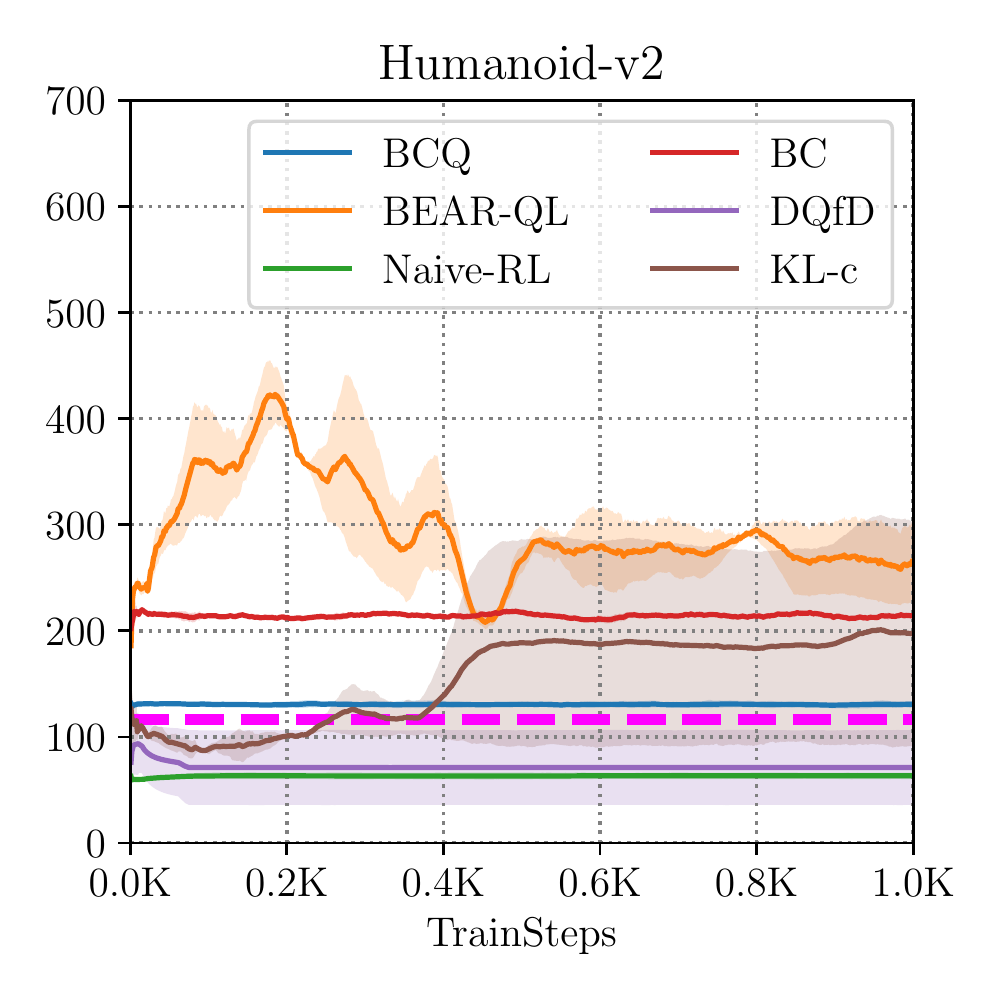}
      \caption{\footnotesize Performance of BEAR-QL, BCQ, Na\"ive RL and BC on medium-quality (left) and random (right) data in the Humanoid-v2 environment. Note that BEAR-QL outperforms prior methods.}
      \label{fig:humanoid}
\vspace{-10pt}
\end{wrapfigure}

\begin{figure*}[t!]
\vspace{-0.1in}
    \centering
    \begin{subfigure}[t]{0.23\textwidth}
        \centering
        \includegraphics[width=0.99\linewidth]{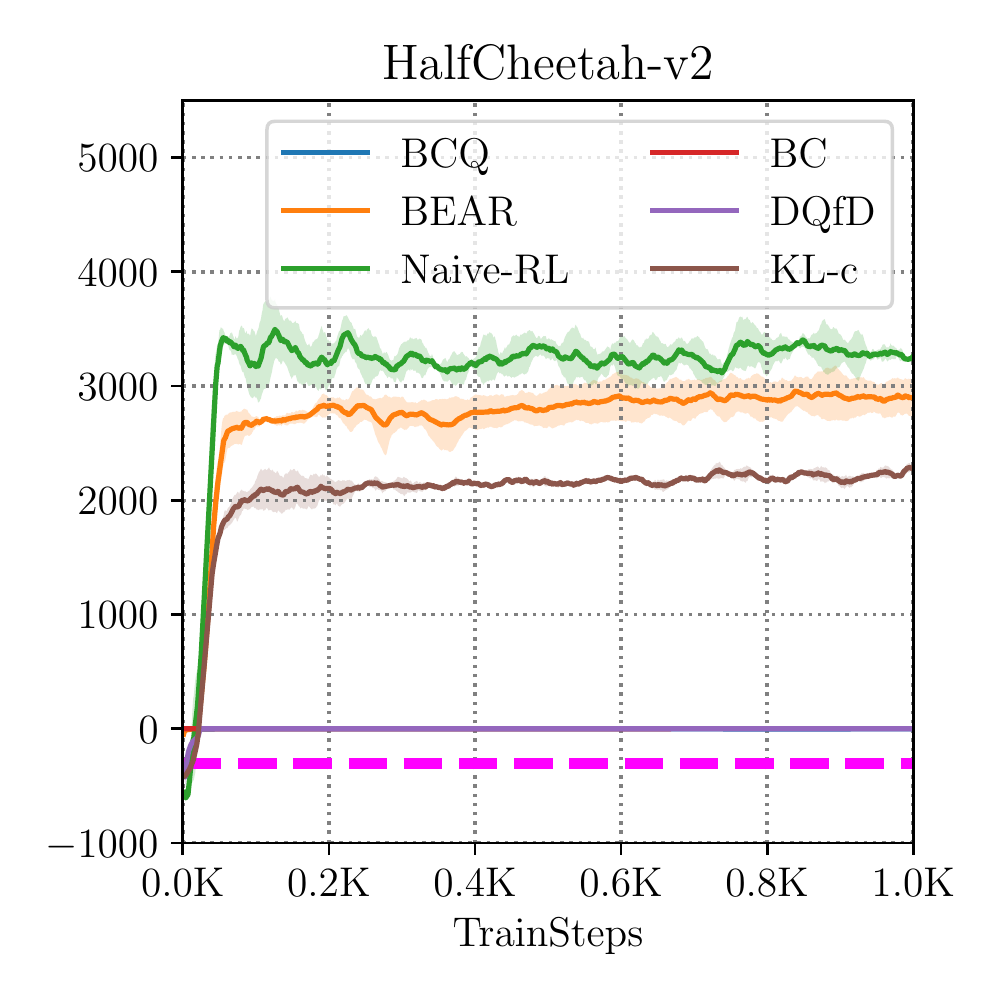}
        \includegraphics[width=0.99\linewidth]{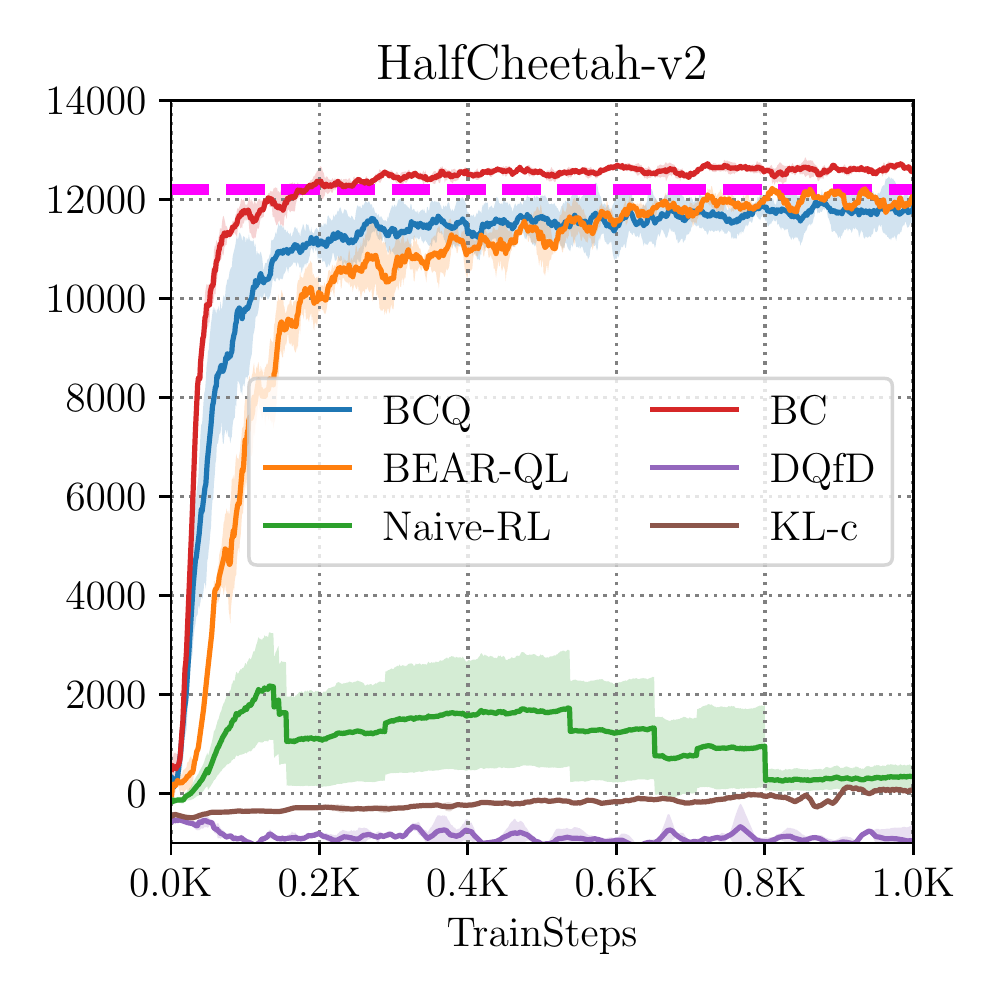}
    \end{subfigure}%
    ~ 
    \begin{subfigure}[t]{0.23\textwidth}
        \centering
        \includegraphics[width=0.99\linewidth]{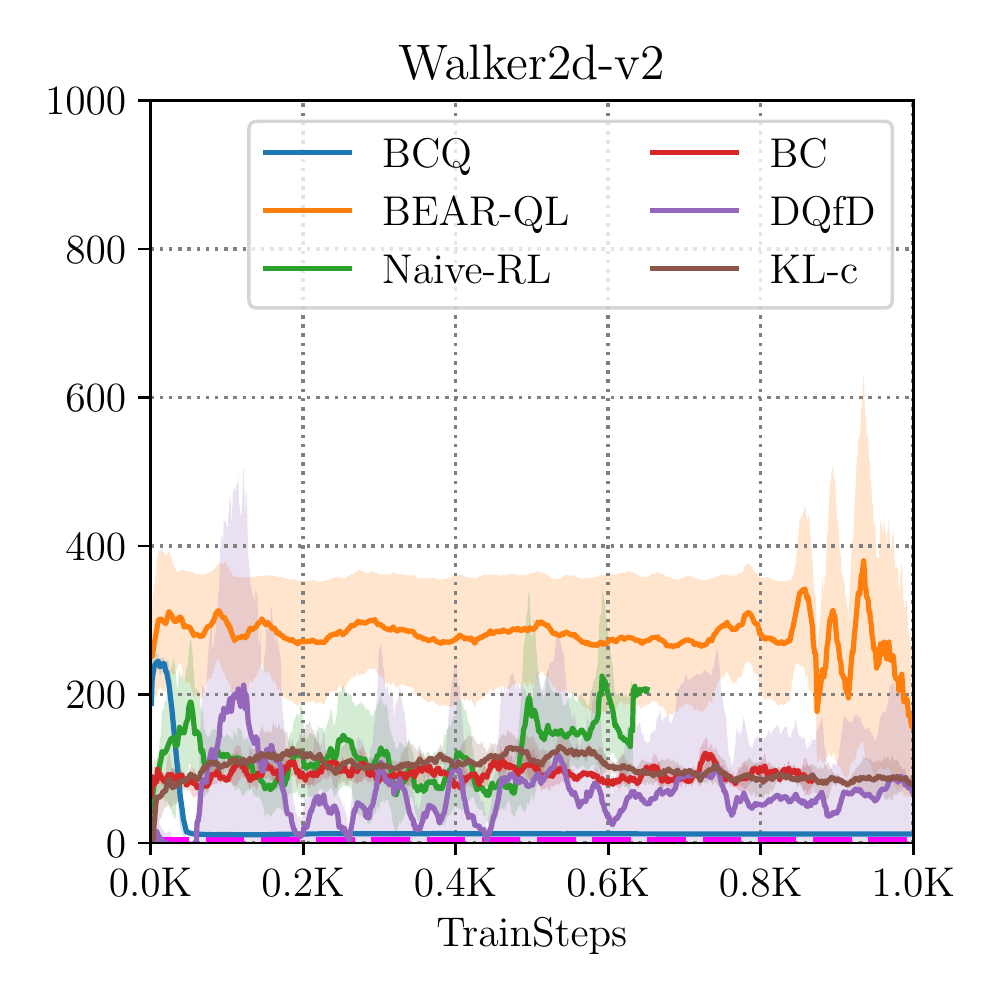}
        \includegraphics[width=0.99\linewidth]{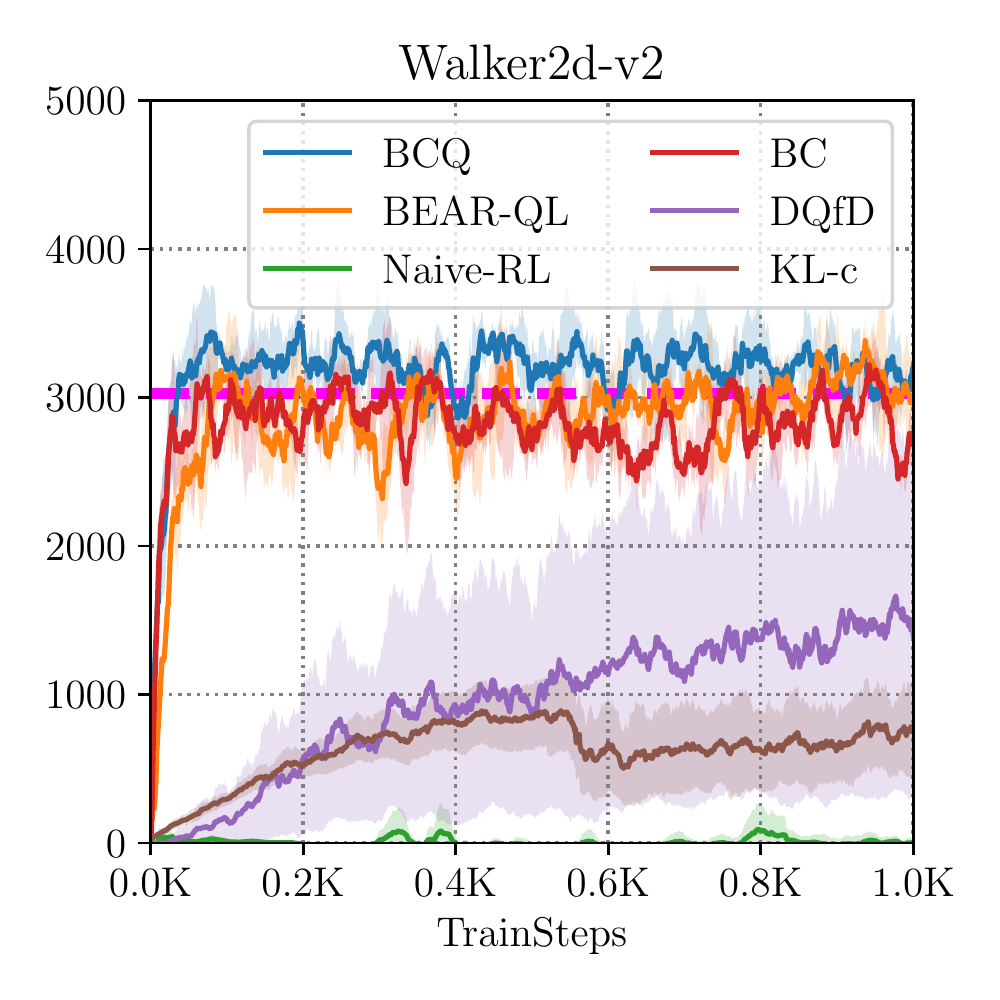}
    \end{subfigure}
    ~
    \begin{subfigure}[t]{0.23\textwidth}
        \centering
        \includegraphics[width=0.99\linewidth]{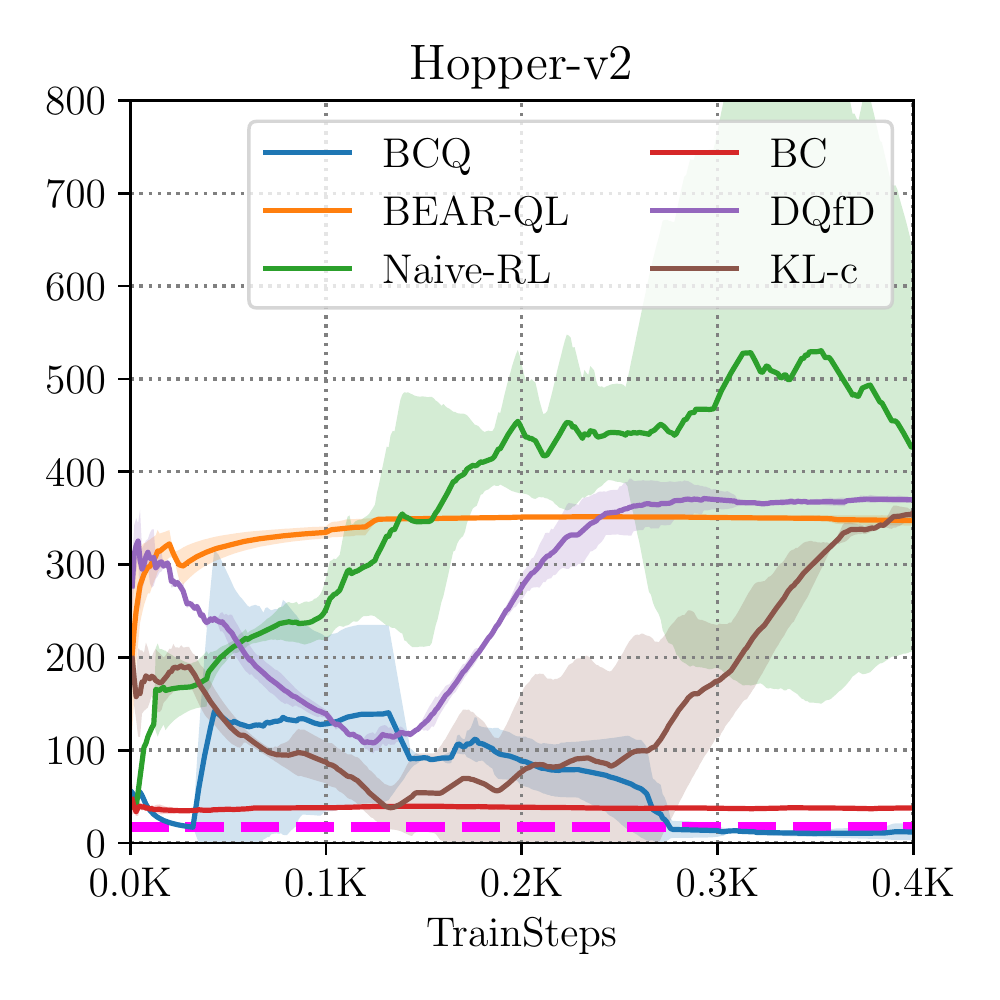}
        \includegraphics[width=0.99\linewidth]{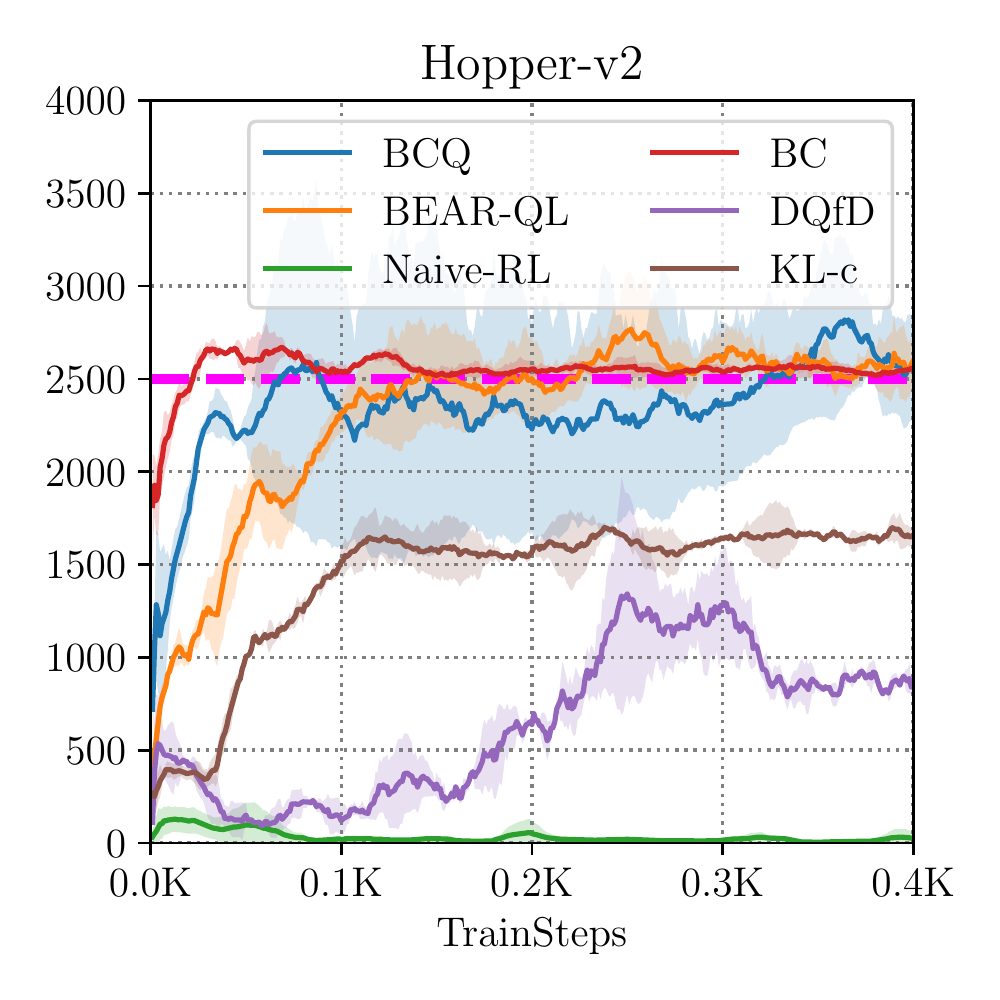}
    \end{subfigure}
    ~
    \begin{subfigure}[t]{0.23\textwidth}
        \centering
        \includegraphics[width=0.99\linewidth]{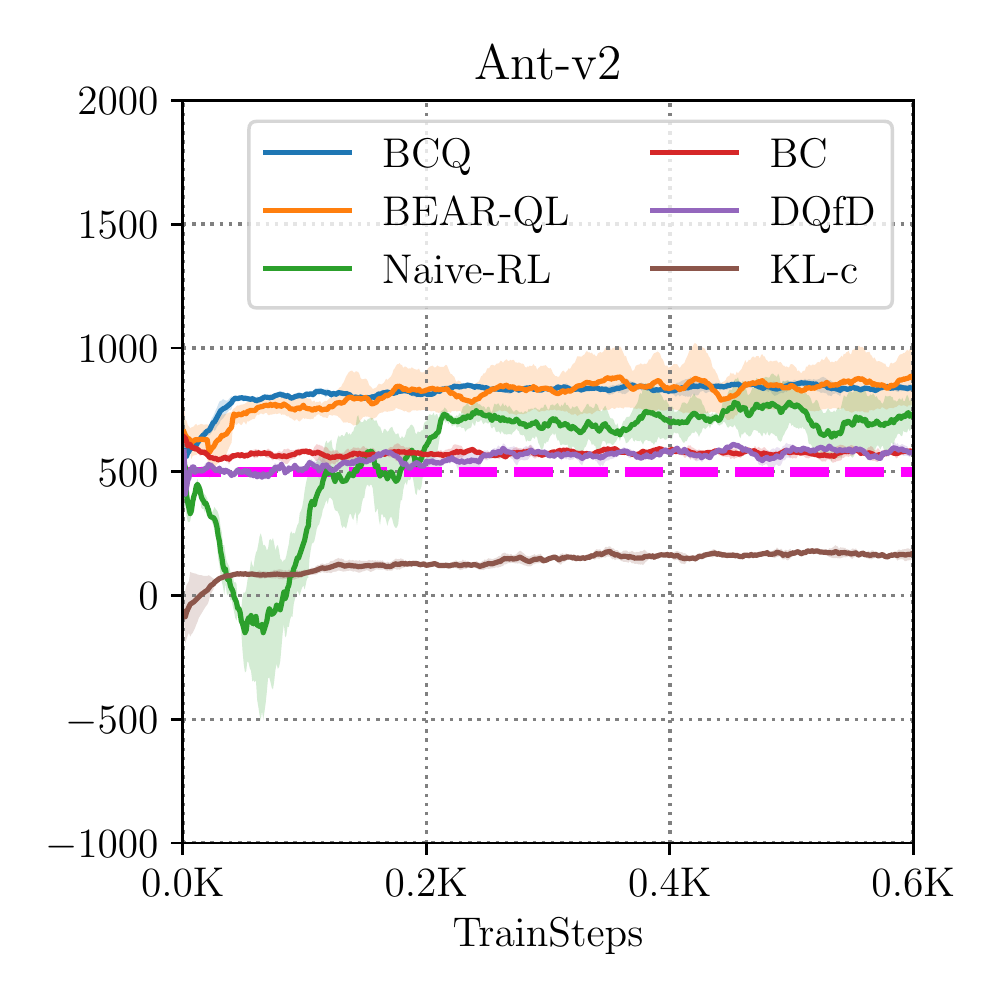}
        \includegraphics[width=0.99\linewidth]{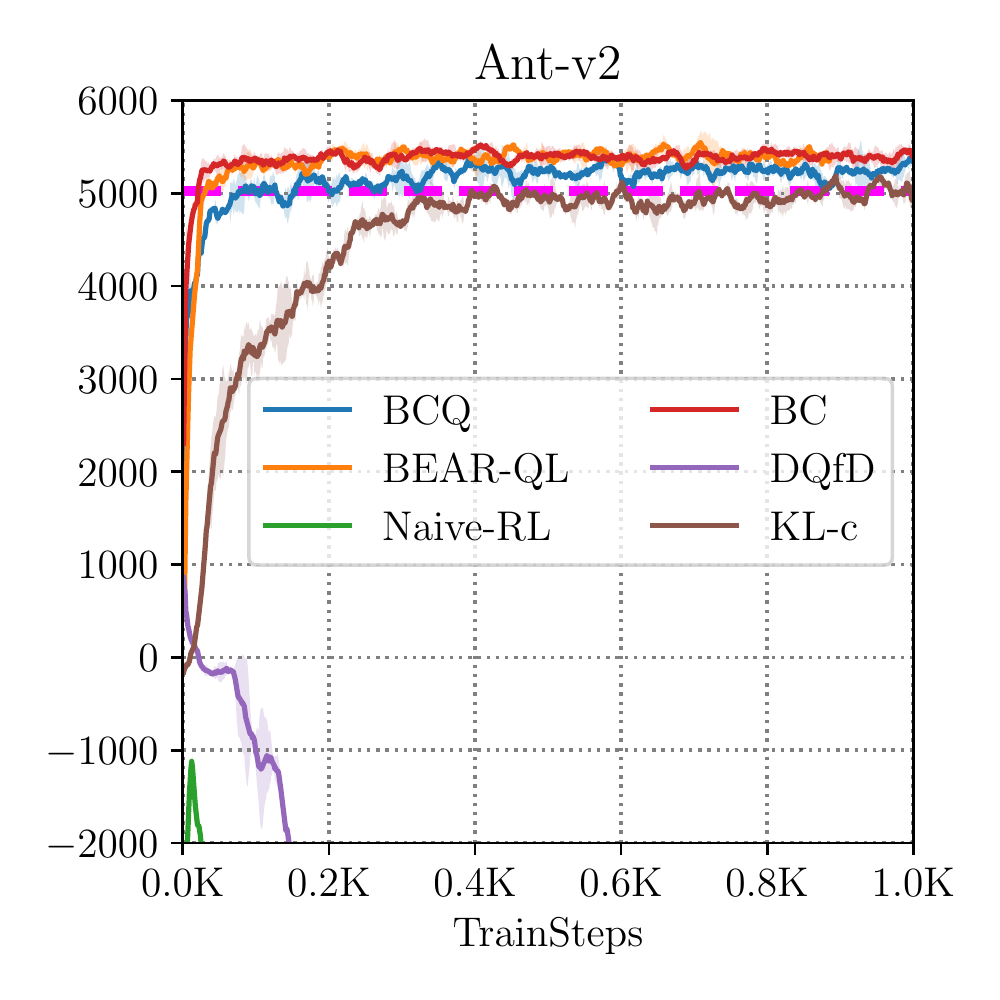}
    \end{subfigure}
    \caption{\footnotesize Average performance of BEAR-QL, BCQ, Na\"ive RL and BC on random data (top row) and optimal data (bottom row) over 5 seeds. BEAR-QL is the only algorithm capable of learning in both scenarios. Na\"{i}ve RL cannot handle optimal data, since it does not illustrate mistakes, and BCQ favors a behavioral cloning strategy (performs quite close to behaviour cloning in most cases), causing it to fail on random data. Average return over the training dataset is indicated by the dashed magenta line.}
    \label{fig:optimal_random}
    \vspace{-0.1in}
\end{figure*}

\vspace{-5pt}
\subsection{Analysis of BEAR-QL}
\label{subsec:ablations}
In this section, we aim to analyze different components of our method via an ablation study. Our first ablation studies the support constraint discussed in Section~\ref{sec:bear}, which uses MMD to measure support. We replace it with a more standard KL-divergence distribution constraint, which measures similarity in density. 
Our hypothesis is that this should provide a more conservative constraint, since matching distributions is not necessary for matching support. KL-divergence performs well in some cases, such as with optimal data, but as shown in Figure~\ref{fig:ablations}, it performs worse than MMD on medium-quality data. Even when KL-divergence is hand tuned fully, so as to prevent instability issues it still performs worse than a not-well tuned MMD constraint. We provide the results for this setting in the Appendix. We also vary the number of samples $n$ that are used to compute the MMD constraint. We find that smaller n ($\approx$ 4 or 5) gives better performance. Although the difference is not large, consistently better performance with 4 samples leans in favour of our hypothesis that an intermediate number of samples works well for support matching, and hence is less restrictive.



\section{Discussion and Future Work}
\vspace{-5pt}
The goal in our work was to study off-policy reinforcement learning with static datasets. We theoretically and empirically analyze how error propagates in off-policy RL due to the use of out-of-distribution actions for computing the target values in the Bellman backup. Our experiments suggest that this source of error is one of the primary issues afflicting off-policy RL: increasing the number of samples does not appear to mitigate the degradation issue (Figure~\ref{fig:divergence}), and training with na\"{i}ve RL on data from a random policy, where there are no out-of-distribution actions, shows much less degradation than training on data from more focused policies (Figure~\ref{fig:optimal_random}). Armed with this insight, we develop a method for mitigating the effect of out-of-distribution actions, which we call BEAR-QL. BEAR-QL constrains the backup to use actions that have non-negligible support under the data distribution, but without being overly conservative in constraining the learned policy. We observe experimentally that BEAR-QL achieves good performance across a range of tasks, and across a range of dataset compositions, learning well on random, medium-quality, and expert data.

\begin{wrapfigure}{r}{0.51\textwidth}
        \includegraphics[width=0.48\linewidth]{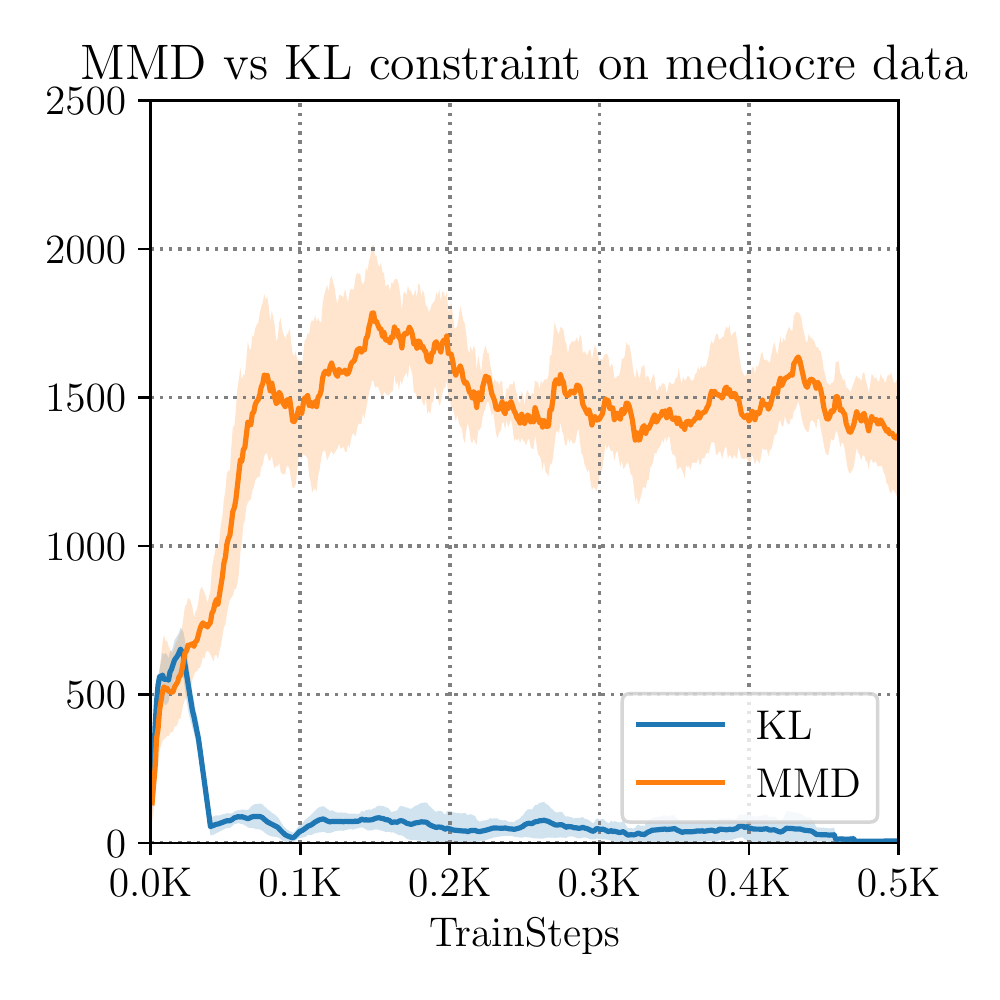}
       ~
        \includegraphics[width=0.48\linewidth]{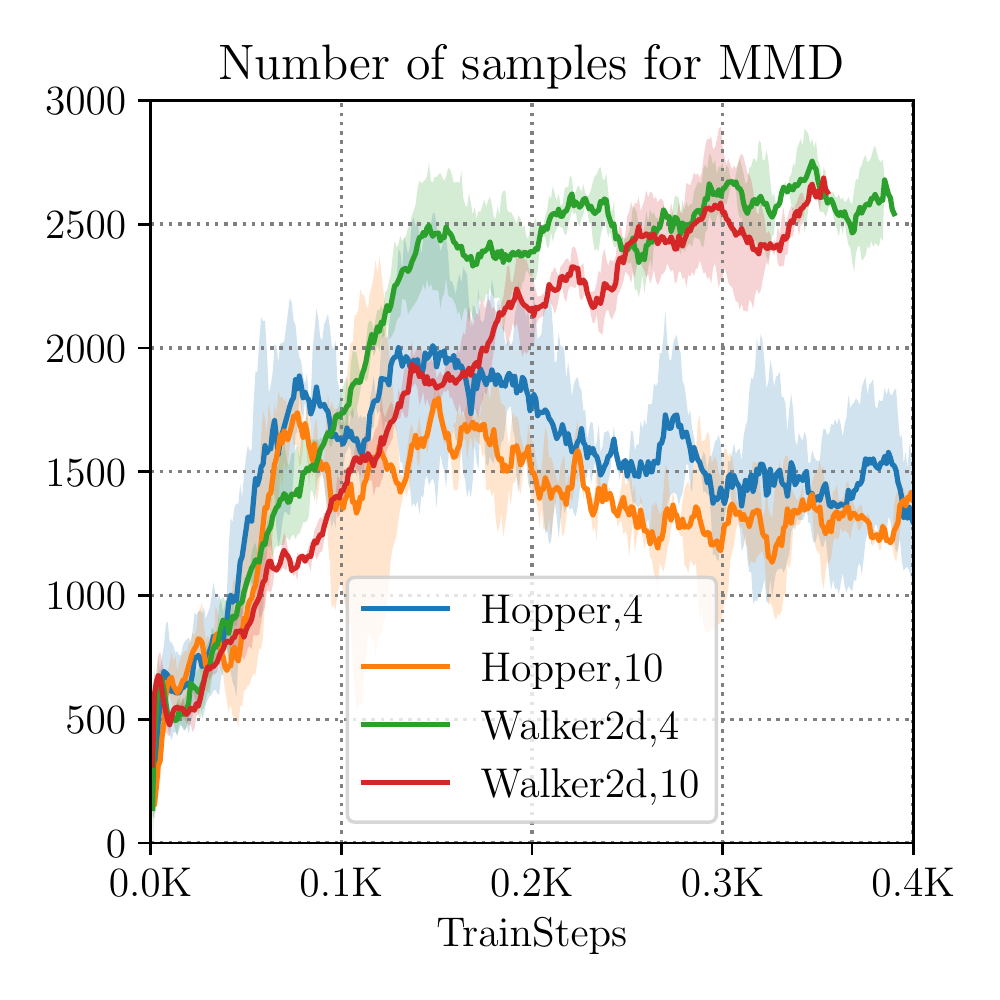}
      \caption{\footnotesize Average return (averaged Hopper-v2 and Walker2d-v2) as a function of train steps for ablation studies from Section~\ref{subsec:ablations}. (a) MMD constrained optimization is more stable and leads to better returns, (b) 4 sample MMD is more performant than 10.}
      \label{fig:ablations}
\vspace{-10pt}
\end{wrapfigure}

While BEAR-QL substantially stabilizes off-policy RL, we believe that this problem merits further study. One limitation of our current method is that, although the learned policies are more performant than those acquired with na\"{i}ve RL, performance sometimes still tends to degrade for long learning runs. An exciting direction for future work would be to develop an early stopping condition for RL, perhaps by generalizing the notion of validation error to reinforcement learning. {A limitation of approaches that perform constrained-action selection is that they can be overly conservative when compared to methods that constrain state-distributions directly, especially with datasets collected from mixtures of policies. We leave it to future work to design algorithms that can directly constrain state distributions. A theoretically robust method for support matching efficiently in high-dimensional continuous action spaces is a question for future research. Perhaps methods from outside RL, predominantly used in domain adaptation, such as using asymmetric f-divergences~\citep{wu19domain} can be used for support restriction.} Another promising future direction is to examine how well BEAR-QL can work on large-scale off-policy learning problems, of the sort that are likely to arise in domains such as robotics, autonomous driving, operations research, and commerce. If RL algorithms can learn effectively from large-scale off-policy datasets, reinforcement learning can become a truly data-driven discipline, benefiting from the same advantage in generalization that has been seen in recent years in supervised learning fields, where large datasets have enabled rapid progress in terms of accuracy and generalization~\cite{imagenet_cvpr09}.

\section*{Acknowledgements}
We thank Kristian Hartikainen for sharing implementations of RL algorithms and for help in debugging certain issues. We thank Matthew Soh for help in setting up environments. We thank Aurick Zhou, Chelsea Finn, Abhishek Gupta, Kelvin Xu and Rishabh Agarwal for informative discussions. We thank Ofir Nachum for comments on an earlier draft of this paper. We thank Google, NVIDIA, and Amazon for providing computational resources. This research was supported by Berkeley DeepDrive, JPMorgan Chase \& Co., NSF IIS-1651843 and IIS-1614653, the DARPA Assured Autonomy program, and ARL DCIST CRA W911NF-17-2-0181.

\bibliography{example_paper}
\bibliographystyle{plainnat}

\newpage
\appendix
\part*{Appendices}

\section{Distribution-Constrained Backup Operator}
\label{app:constrained_backup}
In this section, we analyze properties of the constrained Bellman backup operator, defined as:
\[ \TPi Q(s, a) \defeq \expec \big[ R(s, a) + \gamma \max_{\pi \in \Pi} \expec_{\trans(s' | s, a)}\left[V(s') \right] \big] \]
where
\[V(s) \defeq \max_{\pi \in \Pi} \expec_{\pi}[Q(s, a)].\]

Such an operator can be reduced to a standard Bellman backup in a modified MDP. We can construct an MDP $M'$ from the original MDP $M$ as follows:

\begin{itemize}
    \item The state space, discount, and initial state distributions remain unchanged from $M$.
    \item We define a new action set $\mathcal{A}' = \Pi$ to be the choice of policy $\pi$ to execute.  
    \item We define the new transition distribution $p'$ as taking one step under the chosen policy $\pi$ to execute and one step under the original dynamics $p$: $p'(s'|s, \pi) = E_{\pi}[p(s'|s,a)]$.
    \item Q-values in this new MDP, $Q^\Pi(s, \pi)$ would, in words, correspond to executing policy $\pi$ for one step and executing the policy which maximizes the future discounted value function in the original MDP $M$ thereafter.   
\end{itemize}

Under this redefinition, the Bellman operator $\TPi$ is mathematically the same operation as the Bellman operator under $M'$. Thus, standard results from MDP theory carry over - i.e. the existence of a fixed point and convergence of repeated application of $\TPi$ to said fixed point.

\section{Error Propagation}
\label{app:error_prop}
In this section, we provide proofs for  Theorem~\ref{thm:avi_bound} and Theorem~\ref{thm:conc_coeff_bound}.
\begin{theorem}
\label{thm:avi_bound_proof}
Suppose we run approximate distribution-constrained value iteration with a set constrained backup $\TPi$. Assume that $\delta(s,a) \ge \max_k |Q_k(s,a) - \TPi Q_{k-1}(s,a)|$ bounds the Bellman error. Then,
\[\lim_{k \to \infty} \expec_{\rhoinit}[|V_k(s) - V^*(s)|] \le 
\frac{\gamma}{(1-\gamma)^2}\left[ C(\Pi)\expec_\mu[\max_{\pi \in \Pi} \expec_{\pi}[\projerr(s,a)]] + \frac{1-\gamma}{\gamma}\alpha(\Pi) \right]
\]
\end{theorem}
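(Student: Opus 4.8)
The plan is to adapt the classical approximate-value-iteration error-propagation argument (in the style of \citet{munos2005erroravi}) to the constrained operator $\TPi$, the one new ingredient being the suboptimality bias $\alpha(\Pi)$ that accounts for $\TPi$ differing from the true optimality backup $\backup$. Throughout I write $\pi_k$ for the policy in $\Pi$ that is greedy with respect to $Q_k$, so that $V_k \equiv V^{\pi_k}$ is the value actually attained by that policy (this reading is also what makes the stated $C(\Pi)$-weighting, with its $k\gamma^{k-1}$ structure, come out correctly), and I abbreviate the per-iteration Bellman error by $\projerr$.

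First I would establish a one-step, componentwise error recursion for $Q^* - Q_k$. The key algebraic move is to insert $\TPi Q^*$ and use $\backup Q^* = Q^*$, writing $Q^* - \TPi Q_{k-1} = (\backup Q^* - \TPi Q^*) + (\TPi Q^* - \TPi Q_{k-1})$. The first bracket is nonnegative (since $\TPi$ maximizes over a restricted policy set, so $\TPi Q^* \le \backup Q^*$) and is bounded pointwise by $\alpha(\Pi)$ by definition of the suboptimality constant; the second bracket I would bound by $\gamma P^{\pi}|Q^* - Q_{k-1}|$ for the policy $\pi \in \Pi$ attaining the constrained maximum, exploiting that $V(s)=\max_{\pi\in\Pi}\expec_{\pi}[Q(s,a)]$ is a maximum and hence contracts in $\ell_\infty$ and yields transition-operator bounds componentwise. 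Adding the approximation error $|Q_k - \TPi Q_{k-1}|\le\projerr$ then gives, pointwise, $|Q^*-Q_k| \le \gamma P^{\pi}|Q^*-Q_{k-1}| + \projerr + \alpha(\Pi)\mathbf{1}$.

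Next I would convert the iterate error into the policy loss $V^* - V^{\pi_k}$. Using the standard identity that expands $V^* - V^{\pi_k}$ through $(I-\gamma P^{\pi_k})^{-1}$ applied to a one-step comparison between $\pi_k$ and $\pi^*$, the loss is controlled by $(I-\gamma P^{\pi_k})^{-1}$ acting on $|Q^*-Q_k|$; this is the source of one factor $\tfrac{1}{1-\gamma}$. I would then unroll the recursion of the previous step all the way down, producing a weighted sum $\sum_j \gamma^j (P^{\pi_1}\cdots P^{\pi_j})\projerr$ together with a geometric sum of the $\alpha(\Pi)$ bias and a transient term $\gamma^k(\cdots)|Q^*-Q_0|$ that vanishes as $k\to\infty$. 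Composing with the $(I-\gamma P)^{-1}$ factor is exactly what produces the double sum whose weights are $k\gamma^{k-1}$.

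Finally I would take $\expec_{\rhoinit}$ and apply the concentrability assumption. Each summand has the form $\rhoinit P^{\pi_1}\cdots P^{\pi_m}\projerr$, which Assumption~\ref{assumption:conc} bounds by $c(m)\mu$, turning it into $c(m)\,\expec_\mu[\max_{\pi\in\Pi}\expec_{\pi}[\projerr]]$; collecting the resulting series and using $C(\Pi)=(1-\gamma)^2\sum_k k\gamma^{k-1}c(k)$ produces the coefficient $\tfrac{\gamma}{(1-\gamma)^2}C(\Pi)$ on the error term. The $\alpha(\Pi)$ bias, being a uniform $\ell_\infty$ quantity not reweighted by concentrability, accumulates through a single geometric sum into $\tfrac{\alpha(\Pi)}{1-\gamma}=\tfrac{\gamma}{(1-\gamma)^2}\cdot\tfrac{1-\gamma}{\gamma}\alpha(\Pi)$, giving the second term. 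I expect the main obstacle to be the bookkeeping: threading the state-dependent maximizing policies through as genuine transition operators $P^{\pi_i}$ so that the concentrability bound (stated for arbitrary sequences $\pi_1,\dots,\pi_k\in\Pi$) applies, and carrying out the double summation so the $k\gamma^{k-1}c(k)$ weights line up exactly with the definition of $C(\Pi)$, all while keeping the $\alpha$ bias on its separate, un-reweighted track with the correct $\tfrac{1}{1-\gamma}$ constant.
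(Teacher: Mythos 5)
The weak point is precisely the step you wave off as bookkeeping: keeping $\alpha(\Pi)$ on a ``separate, un-reweighted track.'' In your unified recursion the bias enters at \emph{every} iteration, $|Q^*-Q_k| \le \gamma P^{\pi}|Q^*-Q_{k-1}| + \projerr + \alpha(\Pi)\mathbf{1}$, so unrolling already contributes $\alpha(\Pi)/(1-\gamma)$ to the iterate error; and you then insist (correctly, if you want the $k\gamma^{k-1}$ weights of $C(\Pi)$ to appear) on composing the unrolled sum with $(I-\gamma P^{\pi_k})^{-1}$ to pass from iterate error to the loss of $\pi_k$. That composition multiplies everything inside the unrolled sum, including the accumulated $\alpha(\Pi)$ terms, by another factor of order $1/(1-\gamma)$, so your route delivers an $\alpha$-contribution of order $\gamma\alpha(\Pi)/(1-\gamma)^2$, not the $\frac{\gamma}{(1-\gamma)^2}\cdot\frac{1-\gamma}{\gamma}\alpha(\Pi) = \frac{\alpha(\Pi)}{1-\gamma}$ claimed by the theorem. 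Once $\alpha$ sits inside the recursion there is no way to exempt it from the composition, so the argument as planned proves a strictly weaker bound. A second, related leak: your one-step comparison between $\pi_k$ and $\pi^*$ requires bounding $\backup^{\pi^*}V_k - \backup^{\pi_k}V_k \le \backup V_k - \TPi V_k$, and this gap is evaluated at $V_k$, whereas $\alpha(\Pi)$ is defined at $Q^*$; converting one to the other costs additional $\gamma\norminf{V_k - V^*}$ terms that feed back into the propagation.

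The paper sidesteps both problems with a different decomposition: it introduces $\VPi$, the fixed point of $\TPi$, and splits $\expec_{\rhoinit}[|V_k - V^*|] \le \expec_{\rhoinit}[|V_k - \VPi|] + \expec_{\rhoinit}[|\VPi - V^*|]$ by the triangle inequality. The first term compares the iterates to the constrained operator's \emph{own} fixed point, so it is a standard approximate-value-iteration error-propagation bound with $\backup$ replaced by $\TPi$ and $V^*$ by $\VPi$ --- the paper invokes Theorem 1 of \citet{munos2005erroravi} (or Theorem 3 of \citet{farahmand2010error}), valid because $\TPi$ is the exact Bellman optimality operator of a modified MDP whose actions are policies in $\Pi$ (Appendix A) --- and $\alpha(\Pi)$ never appears in it; this is where $C(\Pi)$ and the $\gamma/(1-\gamma)^2$ factor live. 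The second term is a one-time comparison of two fixed points, handled by a short contraction argument, $\norminf{\VPi - V^*} \le \gamma\norminf{\VPi - V^*} + \alpha(\Pi)$, hence $\norminf{\VPi - V^*} \le \alpha(\Pi)/(1-\gamma)$, which is exactly the second term of the bound. If you want to salvage your direct route, the repair is to run your recursion against $Q^\Pi$ (the fixed point of $\TPi$) rather than against $Q^*$, so that the $\alpha$ term vanishes from the recursion entirely, and only at the very end add the fixed-point comparison $\norminf{\VPi - V^*} \le \alpha(\Pi)/(1-\gamma)$.
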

\begin{proof}
We first begin by introducing $\VPi$, the fixed point of $\TPi$. By the triangle inequality, we have:
\begin{align*}
\expec_{\rhoinit}[|V_k(s) - V^*(s)|] &= \expec_{\rhoinit}[|V_k(s,a) - \VPi(s) + \VPi(s) - V^*(s)|]\\
&\le \underbrace{\expec_{\rhoinit}[|V_k(s) - \VPi(s)|]}_{L_1} + \underbrace{\expec_{\rhoinit}[|\VPi(s) - V^*(s)|]}_{L_2}
\end{align*}


First, we note that $\max_\pi \expec_{\pi}[\delta(s,a)]$ provides an upper bound on the value error:
\begin{align*}
|V_k(s) - \TPi V_{k-1}(s)| &= |\max_\pi \expec_{\pi}[Q_k(s,a)] - \max_\pi \expec_{\pi}[\Tpi Q_{k-1}(s,a)]| \\
&\le \max_\pi\expec_{\pi}[| Q_k(s,a) - \Tpi Q_{k-1}(s,a)|]\\
&\le \max_\pi\expec_{\pi}[\projerr(s,a)]
\end{align*}

We can bound $L_1$ with
\[
L_1 \le \frac{2\gamma}{(1-\gamma)^2}[C(\Pi)]\expec_\mu[\max_{\pi \in \Pi} \expec_{\pi}[\delta(s, a)]]
\]
by direct modification of the proof of Theorem 3 of \citet{farahmand2010error} or Theorem 1 of~\citet{munos2005erroravi} with $k=1$ ($p=1$), but replacing $V^*$  with $\VPi$ and $\backup$ with $\TPi$, as $\TPi$ is a contraction and $\VPi$ is its fixed point.
An alternative proof involves viewing $\TPi$ as a backup under a modified MDP (see Appendix~\ref{app:constrained_backup}), and directly apply Theorem 1 of~\citet{munos2005erroravi} under this modified MDP. A similar bound also holds true for value iteration with the $\TPi$ operator which can be analysed on similar lines as the above proof and \citet{munos2005erroravi}.

To bound $L_2$, we provide a simple $\linfnorm$-norm bound, although we could in principle apply techniques used to bound $L_1$ to get a tighter distribution-based bound.
\begin{align*}
\norminf{\VPi - V^*} &= \norminf{ \TPi\VPi - \backup V^*} \\ 
&\le \norminf{\TPi\VPi - \TPi V^*} + \norminf{\TPi\VPi - \backup V^*} \\ 
&\le \gamma \norminf{\VPi - V^*} + \alpha(\Pi)
\end{align*}
Thus, we have $\norminf{\VPi - V^*} \le \frac{\alpha}{1-\gamma}$. Because the maximum is greater than the expectation, $L_2 = \expec_{\rhoinit, \pi}[|\VPi(s) - V^*(s)|] \le \norminf{\VPi - V^*}$.

Adding $L_1$ and $L_2$ completes the proof.
\end{proof}

\begin{theorem}
\label{thm:conc_coeff_proof}
Assume the data distribution $\mu$ is generated by a behavior policy $\beta$, such that $\mu(s,a) = \mu_\beta(s,a)$. Let $\mu(s)$ be the marginal state distribution under the data distribution. Let us define $\Pieps = \{ \pi ~|~ \pi( a | s) = 0 \text{ whenever } \beta( a | s) < \epsilon \}$. Then, there exists a concentrability coefficient $C(\Pieps)$ is bounded as:
\[
C(\Pi_\epsilon) \leq C(\beta) \cdot \Big(1 + \frac{\gamma}{(1 - \gamma) f(\epsilon)} (1 - \epsilon)\Big)
\]
where $f(\epsilon) \defeq \min_{s \in \mathcal{S}, \mu_\Pi(s) > 0} [\mu(s)]$.
\begin{proof}
For notational clarity, we refer to $\Pi_\epsilon$ as $\Pi$ in this proof. The term $\mu_\Pi$ is the highest discounted marginal state distribution starting from the initial state distribution $\rho$ and following policies $\pi \in \Pi$. Formally, it is defined as:
$$ \mu_{\Pi} \defeq \max_{\{\pi_i\}_i :~ \forall~i,~\pi_i \in \Pi} (1 - \gamma) \sum_{m=1}^{\infty} m \gamma^{m-1} \rhoinit P^{\pi_1} \cdots P^{\pi_m} $$


Now, we begin the proof of the theorem. We first note, from the definition of $\Pi$, $\forall~s \in \mathcal{S}~\forall~\pi \in \Pi, \pi(a|s) > 0 \implies \beta(a|s) > \epsilon$. This suggests a bound on the total variation distance between $\beta$ and any $\pi \in \Pi$ for all $s \in \mathcal{S}$, $D_{TV}(\beta(\cdot|s) ||\pi(\cdot|s)) \leq 1 - \epsilon$. This means that the marginal state distributions of $\beta$ and $\Pi$, are bounded in total variation distance by: $D_{TV}(\mu_{\beta}|| \mu_{\Pi}) \leq \frac{\gamma}{1 - \gamma} (1 - \epsilon)$, where $\mu_{\Pi}$ is the marginal state distribution as defined above. This can be derived from~\citet{schulman2015trpo}, Appendix B, which bounds the difference in returns of two policies by showing the state marginals between two policies are bounded if their total variation distance is bounded.

Further, the definition of the set of policies $\Pi$ implies that $\forall~s \in \mathcal{S}, \mu_{\Pi}(s) > 0 \implies \mu_{\beta}(s) \geq f(\epsilon)$, where $f(\epsilon) > 0$ is a constant that depends on $\epsilon$ and captures the minimum visitation probability of a state $s \in \mathcal{S}$ when rollouts are executed from the initial state distribution $\rho$ while executing the behaviour policy $\beta(a|s)$, under the constraint that only actions with $\beta(a|s) \geq \epsilon$ are selected for execution in the environment. Combining it with the total variation divergence bound, $\max_s ||\mu_{\beta}(s) - \mu_{\Pi}(s)|| \leq \frac{\gamma}{1 - \gamma} (1 - \epsilon)$, we get that 
$$\sup_{s \in \mathcal{S}} \frac{\mu_{\Pi}(s)}{\mu_{\beta}(s)} \leq 1 + \frac{\gamma}{(1 - \gamma) f(\epsilon)} (1 - \epsilon)$$

We know that, $C(\Pi) \defeq (1-\gamma)^2\sum_{k=1}^\infty k\gamma^{k-1}c(k)$ is the ratio of the marginal state visitation distribution under the policy iterates when performing backups using the distribution-constrained operator and the data distribution $\mu = \mu_\beta$. Therefore, $$\frac{C(\Pi_\epsilon)}{C(\beta)} \defeq \sup_{s \in \mathcal{S}} \frac{\mu_\Pi(s)}{\mu_\beta(s)} \leq 1 + \frac{\gamma}{(1 - \gamma) f(\epsilon)} (1 - \epsilon) $$
\end{proof}
\end{theorem}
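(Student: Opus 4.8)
The plan is to reduce the bound on the concentrability coefficient to a bound on the supremum ratio of discounted state-marginal densities $\sup_s \mu_{\Pieps}(s)/\mu_\beta(s)$, and to obtain the latter through a total-variation argument in two stages (action-level, then state-level). First I would establish a per-state bound on the action distributions: by the definition of $\Pieps$, every $\pi \in \Pieps$ places zero mass on any action with $\beta(a|s) < \epsilon$, so $\pi$ merely redistributes probability among the actions that $\beta$ already selects with density at least $\epsilon$. Consequently, for each fixed $s$ the conditional total-variation distance satisfies $\TV{\beta(\cdot|s)}{\pi(\cdot|s)} \le 1 - \epsilon$. This is the one place where the threshold $\epsilon$ enters, and it is the quantity that drives the bound to its trivial value as $\epsilon \to 1$.

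Second, I would lift this uniform per-state action bound to a bound on the discounted state marginals. Invoking the perturbation argument of \citet{schulman2015trpo} (Appendix B) — which controls the divergence between two policies' state-visitation distributions in terms of their per-state action divergence — the per-state bound from the first step yields $\TV{\mu_\beta}{\mu_{\Pieps}} \le \frac{\gamma}{1-\gamma}(1-\epsilon)$, where the factor $\gamma/(1-\gamma)$ reflects the compounding of per-step deviations over the discounted horizon. I expect this to be the main obstacle: the TRPO-style lemma bounds an aggregated divergence between marginals, whereas $\mu_{\Pieps}$ is defined as the \emph{worst-case} discounted marginal over sequences $\{\pi_i\} \subset \Pieps$, so I must be careful that the maximum over policy sequences commutes correctly with the telescoping per-step bound and that the pointwise deviation $|\mu_{\Pieps}(s) - \mu_\beta(s)| \le \frac{\gamma}{1-\gamma}(1-\epsilon)$ holds uniformly in $s$.

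Third, I would convert this marginal TV bound into the desired density ratio. The support restriction also implies $\mu_{\Pieps}(s) > 0 \implies \mu_\beta(s) \ge f(\epsilon)$, which is exactly the role of $f(\epsilon)$ as the minimum visitation probability over states reachable under policies in $\Pieps$. Writing $\mu_{\Pieps}(s) = \mu_\beta(s) + (\mu_{\Pieps}(s) - \mu_\beta(s))$, dividing by $\mu_\beta(s) \ge f(\epsilon)$, and substituting the pointwise deviation bound then gives $\sup_s \mu_{\Pieps}(s)/\mu_\beta(s) \le 1 + \frac{\gamma}{(1-\gamma)f(\epsilon)}(1-\epsilon)$.

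Finally, I would connect this density ratio back to the concentrability coefficients. From the definition $C(\Pi) = (1-\gamma)^2\sum_k k\gamma^{k-1}c(k)$, where $c(k)$ is the worst-case ratio of the $k$-step visitation density to $\mu$, pulling the supremum over states outside the discounted sum identifies $C(\Pieps)/C(\beta)$ with $\sup_s \mu_{\Pieps}(s)/\mu_\beta(s)$; substituting the bound from the third step yields the claim. The only delicate bookkeeping here is matching the weighting $(1-\gamma)^2 \sum_k k\gamma^{k-1}$ against the $(1-\gamma)\sum_m m\gamma^{m-1}$ that defines $\mu_{\Pieps}$, which I would handle by absorbing the common normalization constant into $C(\beta)$ so that the ratio is exactly the supremum of marginal densities.
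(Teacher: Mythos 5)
Your proposal is correct and follows essentially the same route as the paper's own proof: the per-state support argument giving $\TV{\beta(\cdot|s)}{\pi(\cdot|s)} \leq 1-\epsilon$, the lift to discounted state marginals via \citet{schulman2015trpo} (Appendix B), division by $f(\epsilon)$ to obtain $\sup_s \mu_{\Pieps}(s)/\mu_\beta(s) \leq 1 + \frac{\gamma}{(1-\gamma)f(\epsilon)}(1-\epsilon)$, and the identification of this ratio with $C(\Pieps)/C(\beta)$. The two delicate points you flag (commuting the maximum over policy sequences with the per-step TV bound, and the normalization bookkeeping between the concentrability weighting and the marginal definition) are treated at the same level of informality in the paper itself, so your argument matches the published one.
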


\section{Additional Details Regarding BEAR-QL}
\label{app:bearql-more}

In this appendix, we address several remaining points regarding the support matching formulation of BEAR-QL, and further discuss its connections to prior work.

\subsection{Why can we choose actions from $\Pieps$, the support of the training distribution, and need not restrict action selection to the policy distribution?}
In Section~\ref{sec:dist_constrained}, we designed a new distribution-constrained backup and analyzed its properties from an error propagation perspective. Theorems~\ref{thm:avi_bound} and~\ref{thm:conc_coeff_bound} tell us that, if the maximum projection error on all actions within the support of the train distribution is bounded, then the worst-case error incurred is also bounded. That is, we have a bound on \mbox{$\max_{\pi \in \Pieps} \expec_\pi [\delta_k(s, a)]$}. In this section, we provide an intuitive explanation for why action distributions that are very different from the training policy distributions, but still lie in the support of the train distribution, can be chosen without incurring large error. In practice, we use powerful function approximators for Q-learning, such as deep neural networks. That is, $\delta_k(s, a)$ is the Bellman error for one iteration of Q-iteration/Q-learning, which can essentially be viewed as a supervised regression problem with a very expressive function class. In this scenario, we expect a bounded error on the entire support of the training distribution, and we therefore expect approximation error to depend less on the specific density of a datapoint under the data distribution, and more on whether or not that datapoint is within the support of the data distribution. I.e., any point that is within the support would have a comparatively low error, due to the expressivity of the function approximator.

Another justification is that, a different version of the Bellman error objective renormalizes the action-distributions to the uniform distribution by applying an inverse behavior policy density weighting. For example, \cite{anots08fitted, antos07value} use this variant of Bellman error: 
$$Q_{k+1}= {\operatorname{argmin}}_{Q} \sum_{i=1, a_i \sim \beta(\cdot|s_i)}^{N} \frac{1}{\beta\left(a_{i} | s_{i}\right)}\left(Q\left(s_{i}, a_{i}\right)-\left[R{(s, a)}+\gamma \max _{a' \in \mathcal{A}} Q_{k}\left(s_{i+1}, a'\right)\right]\right)^{2}$$
This implies that this form of Bellman error mainly depends upon the support of the behaviour policy $\beta$ (i.e. the set of action samples sampled from $\beta$ with a high-enough probability which we formally refer to as $\beta(a|s) \geq \epsilon$ in the main text). In a scenario when this form of Bellman error is being minimized, $\delta_k(s, a)$ is defined as
$$\delta_k(s, a) = \frac{1}{\beta(a|s)} \left| Q_k(s, a) - \backup Q_{k-1}(s, a)\right| $$ 
The overall error, hence, incurred due to error propagation is expected to be insensitive to distribution change, provided the support of the distribution doesn't change. Therefore, all policies $\pi \in \Pieps$ incur the same amount of propagated error ($|V_k - V_{\Pi}|$) whereas different amount of suoptimality biases -- suggesting the existence of a different policy in $\Pieps$ which propagates the same amount of error while having a lower suboptimality bias. However, in practice, it has been observed that using the inverse density weighting under the behaviour policy doesn't lead to substantially better performance for vanilla RL (not in the setting with purely off-policy, static datasets), so we use the unmodified Bellman error objective.

Both of these justifications indicate that bounded $\delta_k(s, a)$ is reasonable to expect under in-support action distributions.

\subsection{Details on connection between BEAR-QL and distribution-constrained backups}
Distribution-constrained backups perform maximization over a set of policies $\Pieps$ which is defined as the set of policies that share the support with the behaviour policy. In the BEAR-QL algorithm, $\pi_\phi$ is maximized towards maximizing the expected Q-value for each state under the action distribution defined by it, while staying in-support (through the MMD constraint). The maximization step biases $\pi_\phi$ towards the in-support actions which maximize the current Q-value. By sampling multiple Dirac-delta action distributions -  $\delta_{a_i}$ - and then performing an explicit maximum over them for computing the target is a stochastic approximation to the distribution-constrained operator. What is the importance of training the actor to maximize the expected Q-value? We found empirically that this step is important as without this maximization step and high-dimensional action spaces, it is likely to require many more samples (exponentially more, due to curse of dimensionality) to get the correct action that maximizes the target value while being in-support. This is hard and unlikely, and in some experiments we tried with this variant, we found it to lead to suboptimal solutions. At evaluation time, we use the Q-function as the actor. The same process is followed. Dirac-delta action distribution candidates $\delta_{a_i}$ are sampled, and then the action $a_i$ that is gives the empirical maximum over the Q-function values is the action that is executed in the environment.
 
\subsection{How effective is the $\operatorname{MMD}$ constraint in constraining supports of distributions? }
\label{app:mmd}
In Section \ref{sec:bear}, we argued in favour of the usage of the sampled $\operatorname{MMD}$ distance between distributions to search for a policy that is supported on the same support as the train distribution. Revisiting the argument, in this section, we argue, via numerical simulations, the effectiveness of the $\operatorname{MMD}$ distance between two probability distributions in constraining the support of the distribution being learned, without constraining the distribution density function too much. While, MMD distance computed exactly between two distribution functions will match distributions exactly and that explains its applicability in 2-sample tests, however, with a limited number of samples, we empirically find that the values of the $\mmd$ distance computed using samples from two $d$-dimensional Gaussian distributions with diagonal covariance matrices: $P \defeq \mathcal{N}(\mu_P, \Sigma_P)$ and $Q \defeq \mathcal{N}(\mu_Q, \Sigma_Q)$ is roughly equal to the $\mmd$ distance computed using samples from $\mathcal{U}_{\alpha}(P) \defeq [\text{~Uniform}(\mu_P^1 \pm \alpha \Sigma_{P}^{1,1})] \times \cdots \times [\text{~Uniform}(\mu_P^d \pm \alpha \Sigma_{P}^{d,d})]$ and $Q$. This means that when minimizing the $\mmd$ distance to train distribution $Q$, the gradient signal would push $Q$ towards a uniform distribution supported on $P$'s support as this solution exhibits a lower MMD value -- which is the objective we are optimizing.

Figure~\ref{fig:mmd} shows an empirical comparison of $\operatorname{MMD}(P, Q)$ when $Q = P$, computed by sampling $n$-samples from $P$, and $\operatorname{MMD}(\mathcal{U}_\alpha(P), Q)$ (also when $Q$ = $P$) computed by sampling $n$-samples from $\mathcal{U}_\alpha(P)$. We observe that $\mmd$ distance computed using limited samples can, in fact, be higher between a distribution and itself as compared to a uniform distribution over a distribution's support and itself. In Figure~\ref{fig:mmd}, note that for smaller values of $n$ and appropriately chosen $\alpha$ (mentioned against each figure, the support of the uniform distribution), the estimator for $\mmd(\mathcal{U}_{\alpha}(P), P)$ can provide lower estimates than the value of the estimator for $\mmd(P, P)$. This observation suggests that when the number of samples is not enough to sample infer the distribution shape, density-agnostic distances like MMD can be used as optimization objectives to push distributions to match supports. Subfigures (c) and (d) shows the increase in MMD distance as the support of the uniform distribution is expanded.

\begin{figure*}[h]
    \centering
    \begin{subfigure}[h]{0.49\textwidth}
        \centering
        \includegraphics[width=0.99\linewidth]{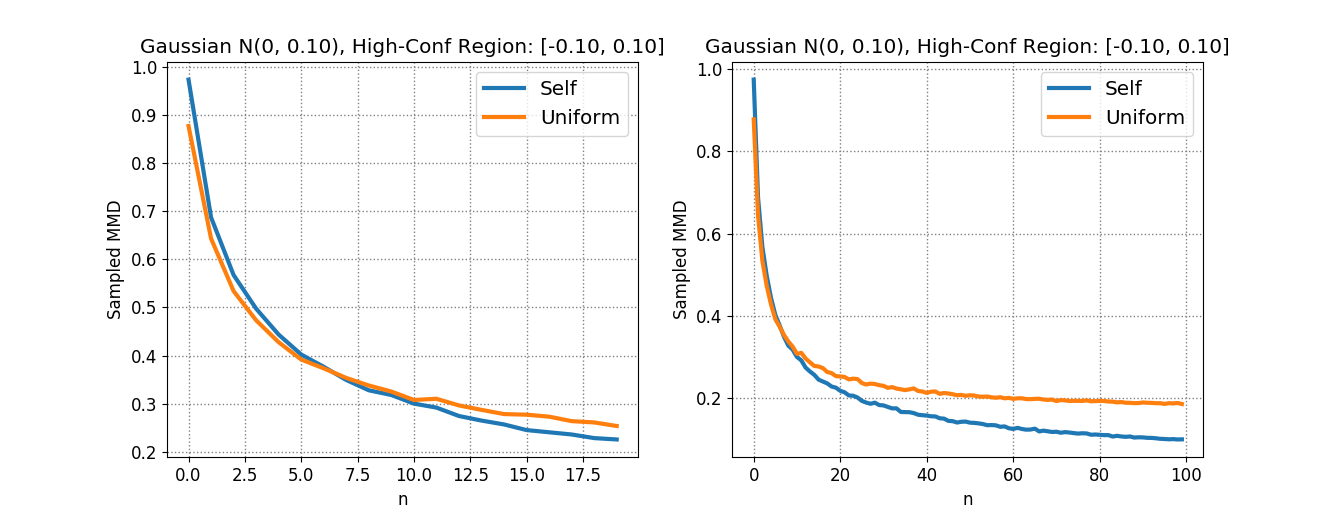}
        \caption{$\mathcal{N}(0, 0.1)$, $\mathcal{U}(-0.1, 0.1)$ }
    \end{subfigure}%
    ~ 
    \begin{subfigure}[h]{0.49\textwidth}
        \centering
        \includegraphics[width=0.99\linewidth]{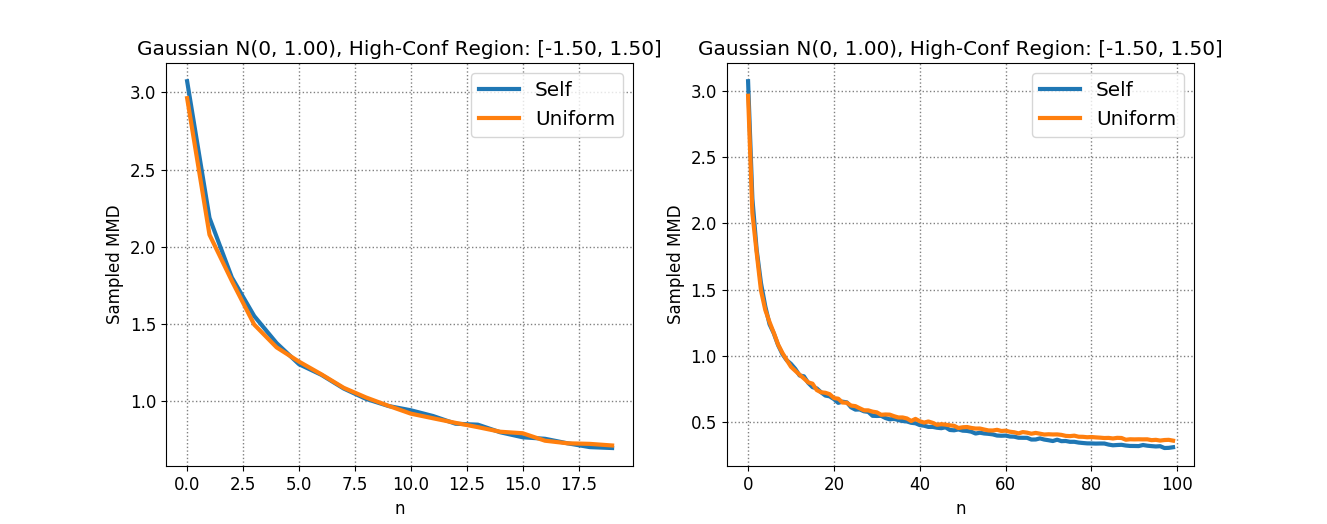}
        \caption{$\mathcal{N}(0, 1.0)$, $\mathcal{U}(-1.5, 1.5)$  }
    \end{subfigure}
    ~
    \begin{subfigure}[h]{0.49\textwidth}
        \centering
        \includegraphics[width=0.99\linewidth]{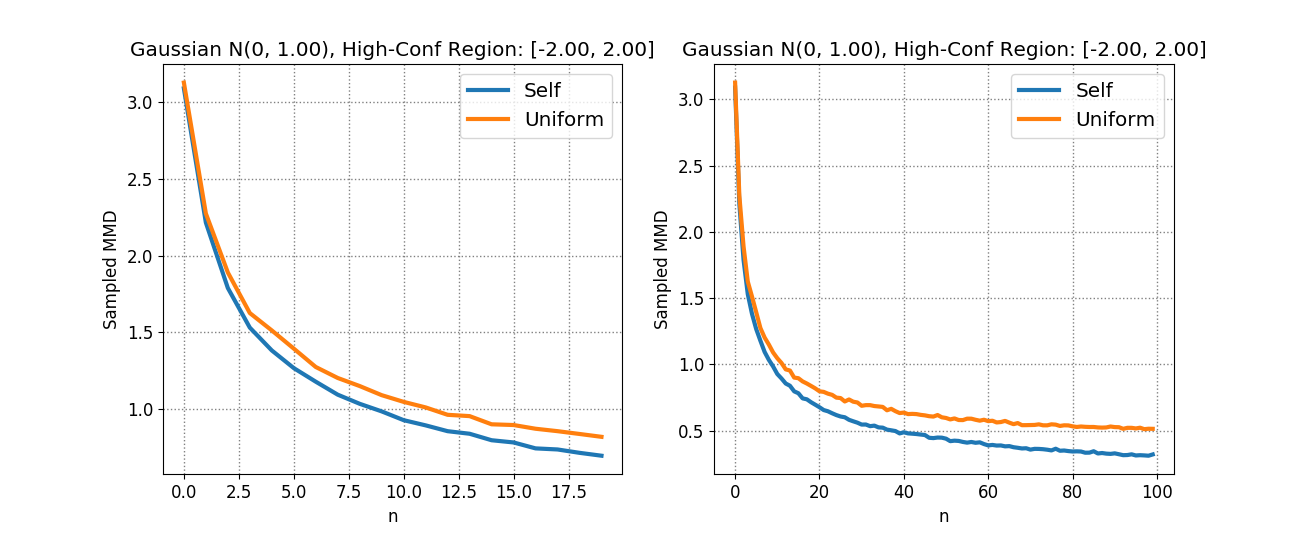}
        \caption{$\mathcal{N}(0, 1.0)$, $\mathcal{U}(-2.0, 2.0)$  }
    \end{subfigure}
    ~
    \begin{subfigure}[h]{0.49\textwidth}
        \centering
        \includegraphics[width=0.99\linewidth]{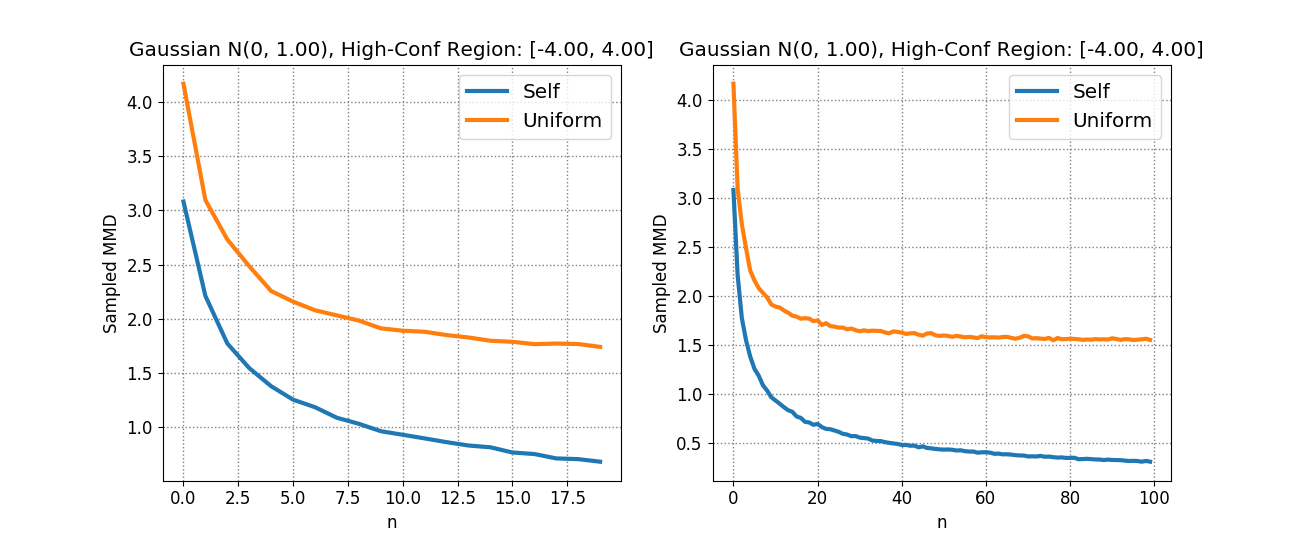}
        \caption{$\mathcal{N}(0, 1.0)$, $\mathcal{U}(-4.0, 4.0)$  }
    \end{subfigure}
    \caption{Comparing $\operatorname{MMD}$ distance between a $1$-d Gaussian distribution ($P$) and itself ($P$), and a uniform distribution over support set of the $P$ and the distribution $P$. The parameters of the Gaussian distribution ($P$) and the uniform distribution being considered are mentioned against each plot. ('Self' refers to $\mmd(P, P)$ and 'Uniform' refers to $\mmd(P, \mathcal{U}(P))$.) Note that for small values of $n \approx 1-10$, the $\mmd$ with the Uniform distribution is slightly lower in magnitude than the $\mmd$ between the distribution  $P$ and itself (sub-figures (a), (b) and (c)). For (d), as the support of this uniform distribution is enlarged, this leads to an increase in the value of $\operatorname{MMD}$ in the uniform approximation case -- which suggests that a near-local minimizer for the $\mmd$ distance can be obtained by making sure that the distribution which is being trained in this process shares the same support as the other given distribution.}
    \label{fig:mmd}
\end{figure*}

In order to provide a theoretical example, we refer to Example 1 in \citet{gretton2012kernel}, and extend it. First, note that the example argues that a fixed sample size of samples drawn from a distribution $P$, there exists another discrete distribution $Q$ supported on $m^2$ samples from the support set of $P$, such that there atleast is a probability $\left(\begin{array}{c}{m^{2}} \\ {m}\end{array}\right) \frac{m !}{m^{2 m}}>1-e^{-1}>0.63$ that a sample from $Q$ is indeed a sample from $P$ as well. So, with a smaller value of $m$, \textit{no} {2-sample test} will be able to distinguish between $P$ and $Q$. We would also note that this example is exactly the argument that our algorithm build upon. We further extend this example by noting that if $Q$ were rather not completely supported on the support of $P$, then there exists atleast a probability of $\epsilon$ that a sample from $Q$ lies outside the support of $P$. This gives us a lower bound on the value of the $\mmd$ estimator, indicating that the $\mmd$ 2-sample test will be able to detect this distribution due to an irreducible difference of $\epsilon \sqrt{\min_{y \in \text{Extremal(P)}} \mathbb{E}_{x \sim P}[k(x, y)]}$ (where $y$ is an "extremal point" in $P$'s support) in the MMD estimate.             

\section{Additional Experimental Details}
\label{app:additional_details}
\paragraph{Data collection} We trained behaviour policies using the Soft Actor-Critic algorithm~\cite{haarnoja2018sac}. In all cases, random data was generated by running a uniform at random policy in the environment. Optimal data was generated by training SAC agents in all 4 domains until convergence to the returns mentioned in Figure~\ref{fig:optimal_random}. Mediocre data was generated by training a policy until the return value marked in each of the plots in Figure~\ref{fig:mediocre}. Each of our datasets contained 1e6 samples. We used the same datasets for evaluating different algorithms to maintain uniformity across results.

\paragraph{Choice of kernels} In our experiments, we found that the choice of the kernel is an important design decision that needs to be made. In general, we found that a Laplacian kernel $k(x, y) = \exp(\frac{-||x - y||}{\sigma})$ worked well in all cases. Gaussian kernel $k(x, y) =\exp(\frac{-||x - y||^2}{2 \sigma^2})$ worked quite well in the case of optimal dataset. For the Laplacian kernel, we chose $\sigma = 10.0$ for Cheetah, Ant and Hopper, and $\sigma=20.0$ for Walker. However, we found that $\sigma=20.0$ worked well for all environments in all settings. For the Gaussian kernel, we chose $\sigma=20.0$ for all settings. Kernels often tend to not provide relevant measurements of distance especially in high-dimensional spaces, so one direction for future work is to design right kernels. We further experimented with a mixture of Laplacian kernel with different bandwidth parameters $\sigma$ ($1.0, 10.0, 50.0$) on Hopper-v2 and Walker2d-v2 where we found that it performs comparably and sometimes is better than a simple Laplacian kernel, probably because it is able to track supports upto different levels of thresholds due to multiple kernels.   

\paragraph{More details about the algorithm} At evaluation time, we find that using the greedy maximum of the Q-function over the support set of the behaviour policy (which can be approximated by sampling multiple Dirac-delta policies $\delta_{a_i}$ from the policy $\pi_\phi$ and performing a greedy maximization of the Q-values over these Dirac-delta policies) works best, better than unrolling the learned actor $\pi_\phi$ in the environment. This was also found useful in \cite{fujimoto2018off}. Another detail about the algorithm is deciding which samples to use for computing the $\mmd$ objective. We train a parameteric model $\pi_{data}$ which fits a tanh-Gaussian distribution to $a$ given the states $s$, $\pi_{data}(\cdot|s) = \tanh{\mathcal{N}(\mu(\cdot|s), \sigma(\cdot|s))}$ and then use this to sample a candidate $n$ actions for computing the MMD-distance, meaning that MMD is computed between $a_1, \cdots, a_N \sim \pi_{data}$ and $\pi_\phi$. We find the latter to work better in practice. Also, computing the $\mmd$ distance between actions before applying the tanh transformation work better, and leads to a constraint, that perhaps provides stronger gradient signal -- because tanh saturates very quickly, after which gradients almost vanish. 

\paragraph{Other hyperparameters} Other hyperparameters include the following -- (1) The variance of the Gaussian $\sigma^2$ /(standard deviation of) Laplacian kernel $\sigma$: We tried a variance of 10, 20, and 40. We found that 10 and 20 worked well across Cheetah, Hopper and Ant, and 20 worked well for Walker2d; (2) The learning rate for the Lagrange multiplier was chosen to be 1e-3, and the $\log$ of the Lagrange multiplier was clipped between $[-5, 10]$ to prevent instabilities; (3) For the policy improvement step, we found using average Q works better than min Q for Walker2d. For the baselines, we used BCQ code from the official implementation accompanying~\cite{fujimoto2018off}, TD3 code from the official implementation accompanying~\cite{fujimoto18addressing} and the BC baseline was the VAE-based behaviour cloning baseline also used in \cite{fujimoto2018off}. We evaluated on 10 evaluation episodes (which were separate from the train distribution) after every 1000 iterations and used the average score and the variance for the plots. 

\section{Additional Experimental Results}
\label{exp:additional_results}

\label{app:q_vs_mc}
\begin{figure*}[h]
    \centering
    \begin{subfigure}[h]{0.31\textwidth}
        \centering
        \includegraphics[width=0.99\linewidth]{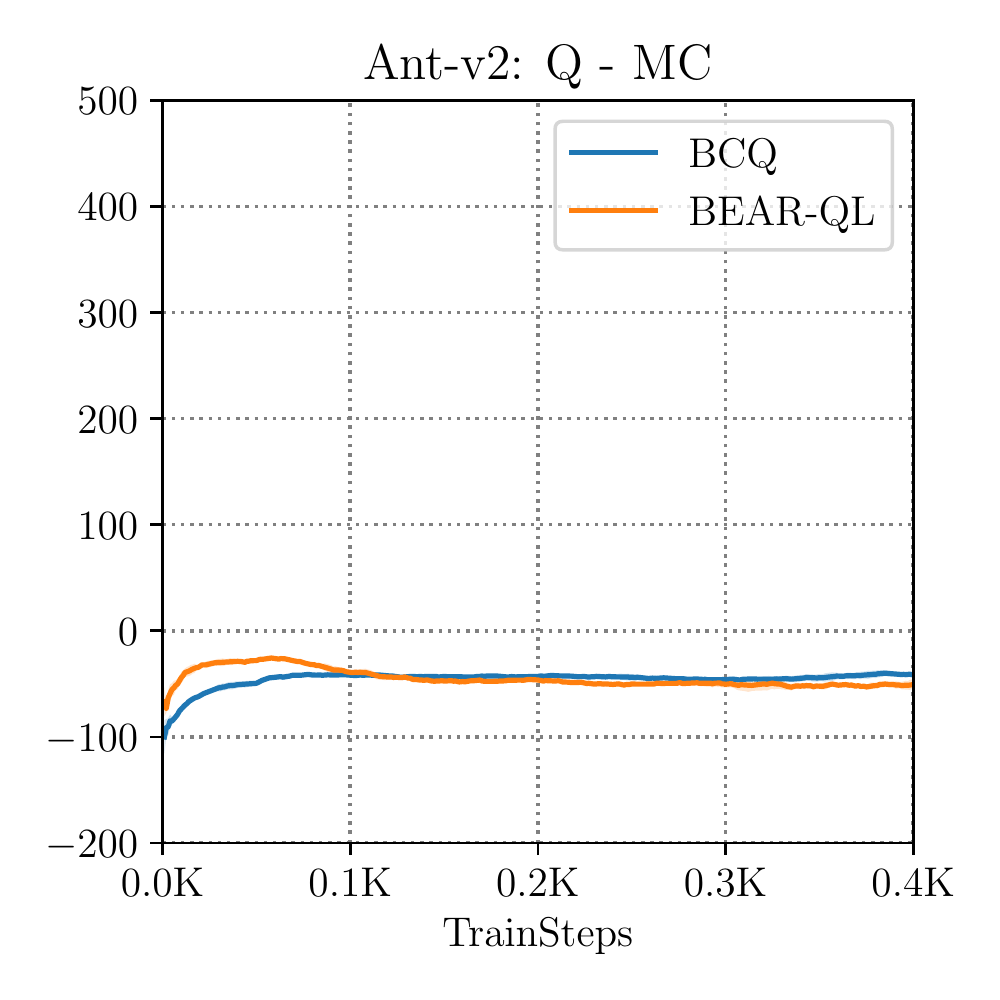}
    \end{subfigure}%
    ~
    \begin{subfigure}[h]{0.31\textwidth}
        \centering
        \includegraphics[width=0.99\linewidth]{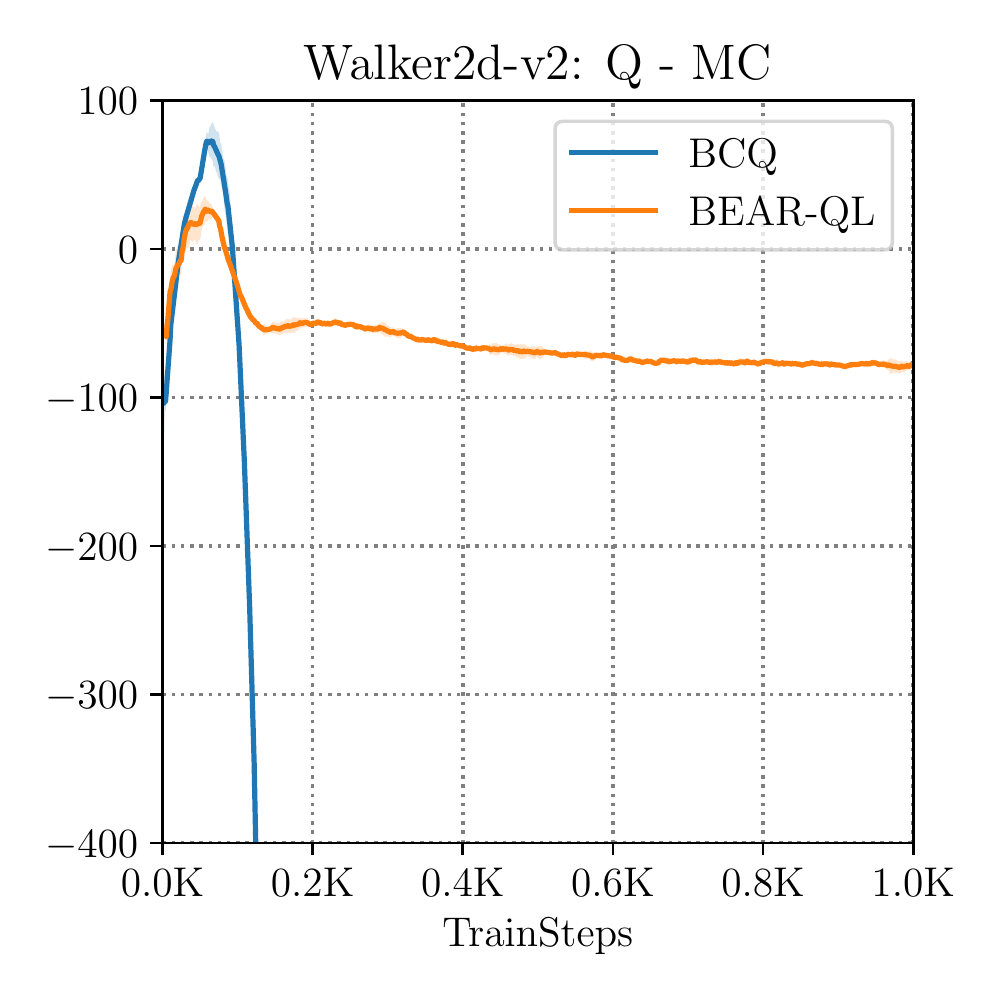}
    \end{subfigure}%
    \caption{The trend of the difference between the Q-values and Monte-Carlo returns: $Q - MC$ returns for 2 environments. Note that a high value of $Q-MC$ corresponds to more overestimation. In these plots, BEAR-QL is more well behaved than BCQ. In Walker2d-v2, BCQ tends to diverge in the negative direction. In the case of Ant-v2, although roughly the same, the difference between Q values and Monte-carlo returns is slightly lower in the case of BEAR-QL suggestion no risk of overestimation. (This corresponds to medium-quality data.)}
    \label{fig:q_mc}
\end{figure*}


\label{app:q_val_compare}
\begin{figure*}[h]
    \centering
    \begin{subfigure}[h]{0.31\textwidth}
        \centering
        \includegraphics[width=0.99\linewidth]{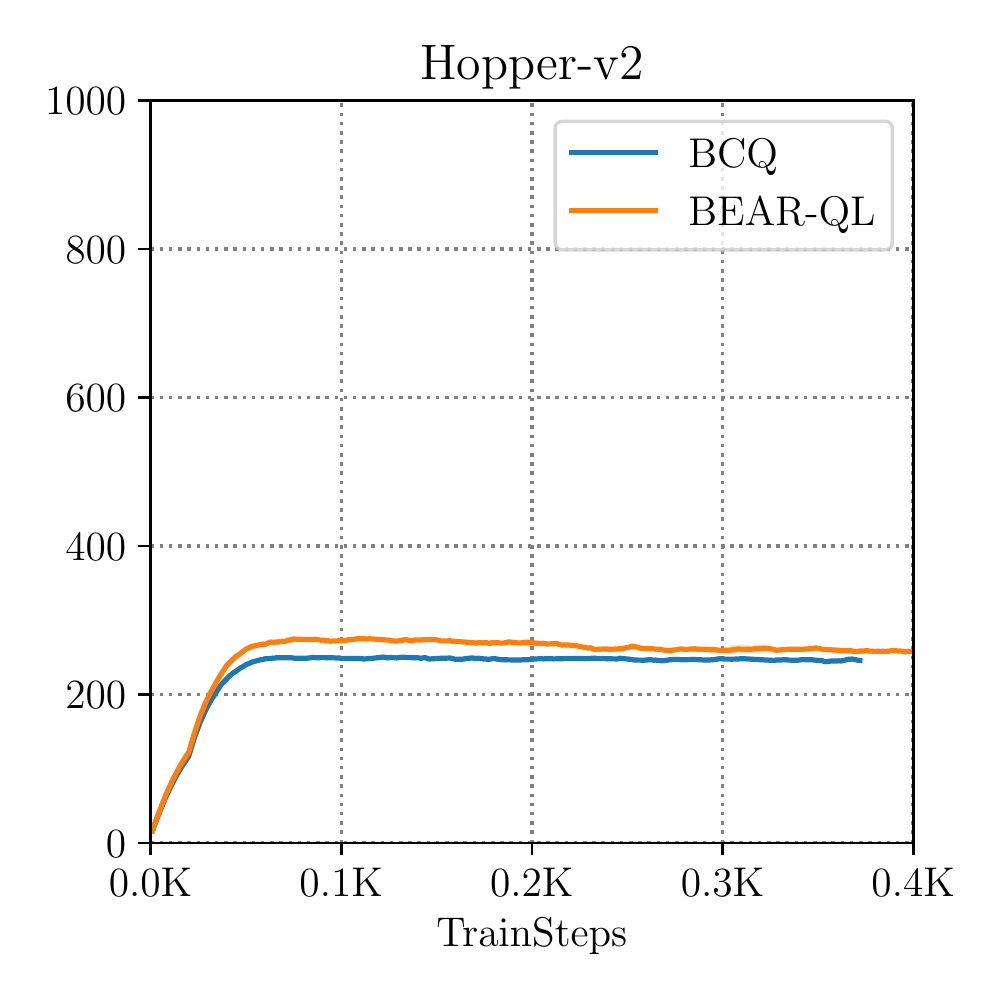}
    \end{subfigure}%
    ~
    \begin{subfigure}[h]{0.31\textwidth}
        \centering
        \includegraphics[width=0.99\linewidth]{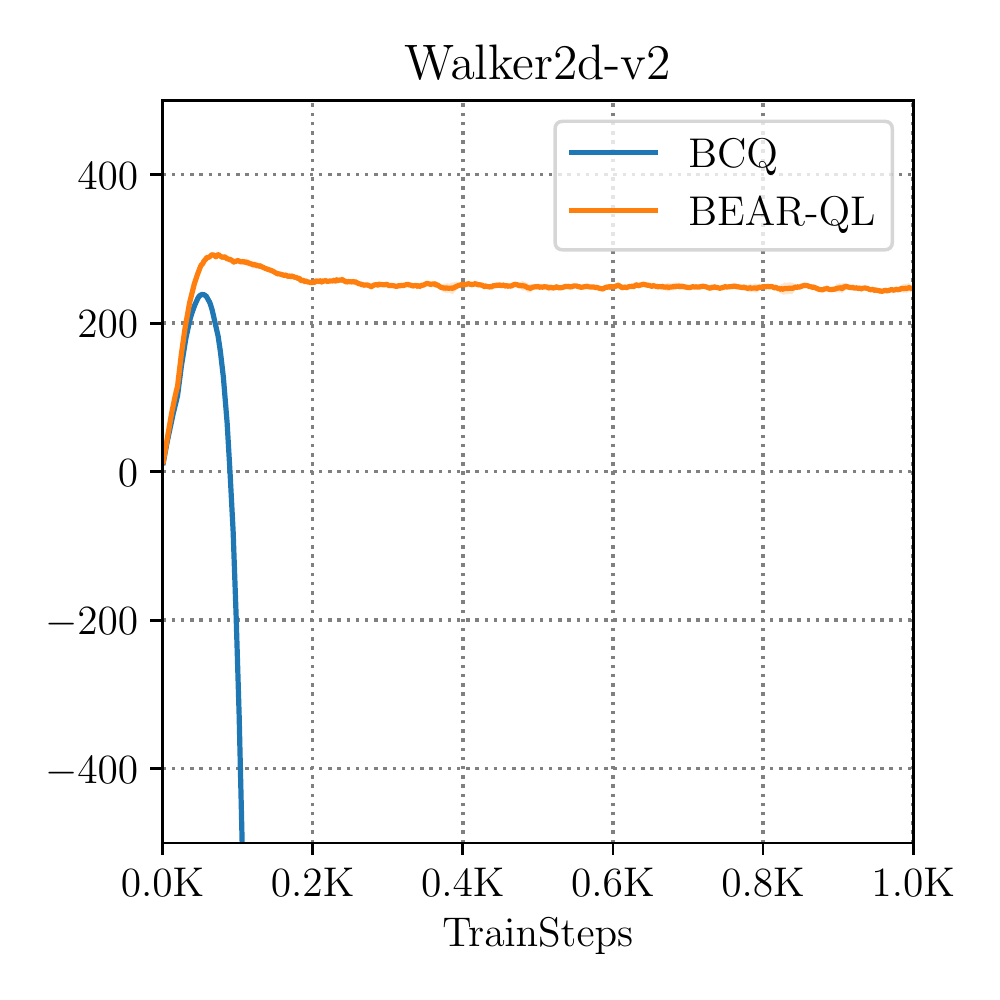}
    \end{subfigure}%
    ~
    \begin{subfigure}[h]{0.31\textwidth}
        \centering
        \includegraphics[width=0.99\linewidth]{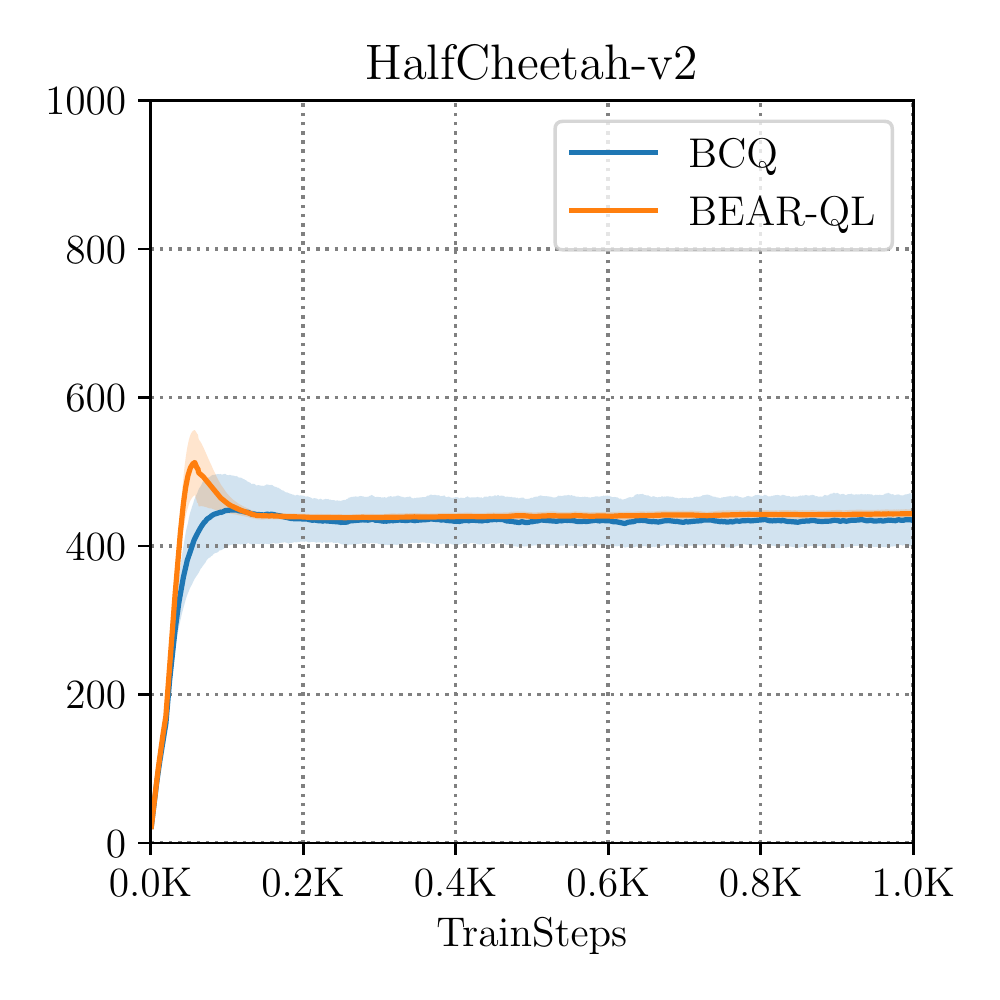}
    \end{subfigure}%
    \caption{The trends of Q-values as a function of number of gradient steps taken in case of 3 environments. BCQs Q-values tend to be more unstable (especially in the case of Walker2d, where they diverge in the negative direction) as compared to BEAR-QL. This corresponds to medium-quality data.}
    \label{fig:q_val_mediocre}
\end{figure*}

In this section, we provide some extra plots for some extra experiments. In Figure~\ref{fig:q_mc} we provide the difference between learned Q-values and Monte carlo returns of the policy in the environment. In Figure~\ref{fig:q_val_mediocre} we provide the trends of comparisons of Q-values learned by BEAR-QL and BCQ in three environments. In Figure~\ref{fig:kl_vs_mmd_single} we compare the performance when using the MMD constraint vs using the KL constraint in the case of three environments. 
In order to be fair at comparing to MMD, we train a model for the behaviour policy and constrain the KL-divergence to this behaviour policy. (For MMD, we compute MMD using samples from the model of the behaviour policy.) Note that in the case of Half Cheetah with medium-quality data, KL divergence constraint works pretty well, but it fails drastically in the case of Hopper and Walker2d and the Q-values tend to diverge. Figure~\ref{fig:kl_vs_mmd_single} summarizes the trends for 3 environments.

\begin{figure*}[t]
    \centering
    \begin{subfigure}[t]{0.31\textwidth}
        \centering
        \includegraphics[width=0.99\linewidth]{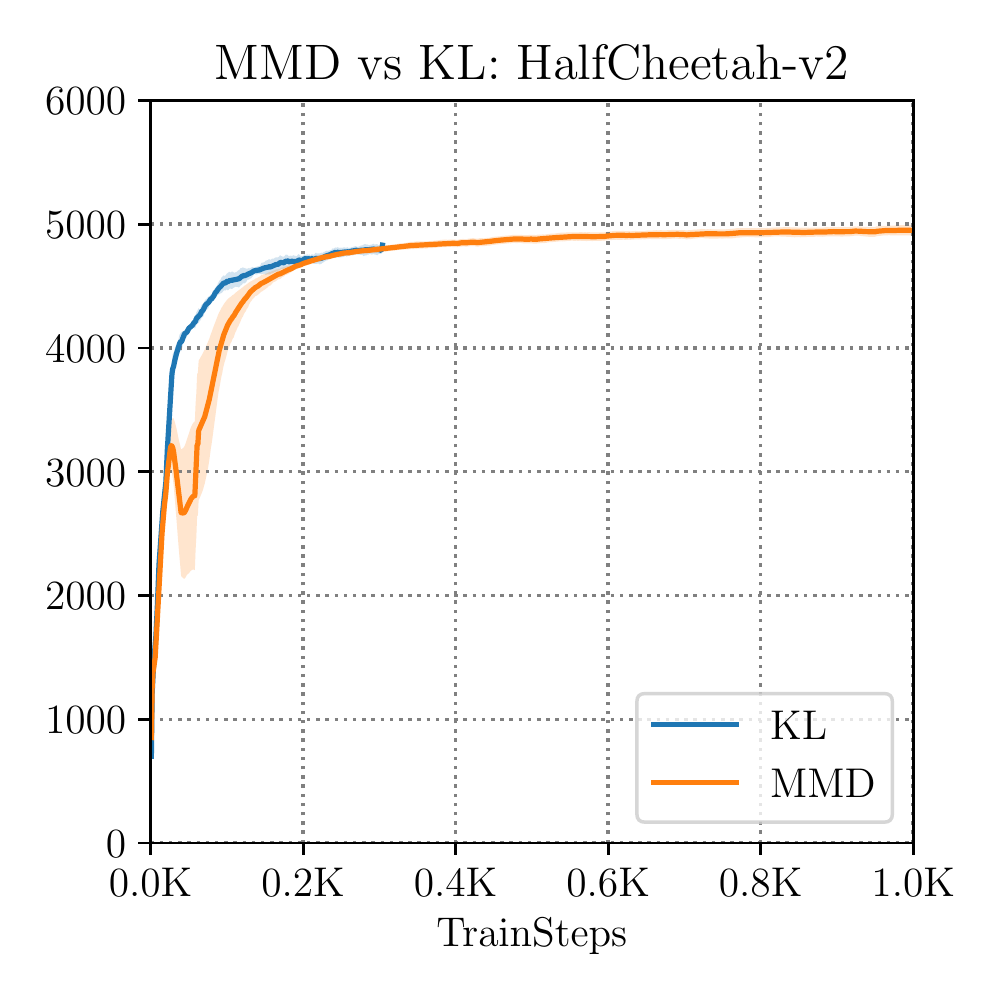}
    \end{subfigure}%
    ~
    \begin{subfigure}[t]{0.31\textwidth}
        \centering
        \includegraphics[width=0.99\linewidth]{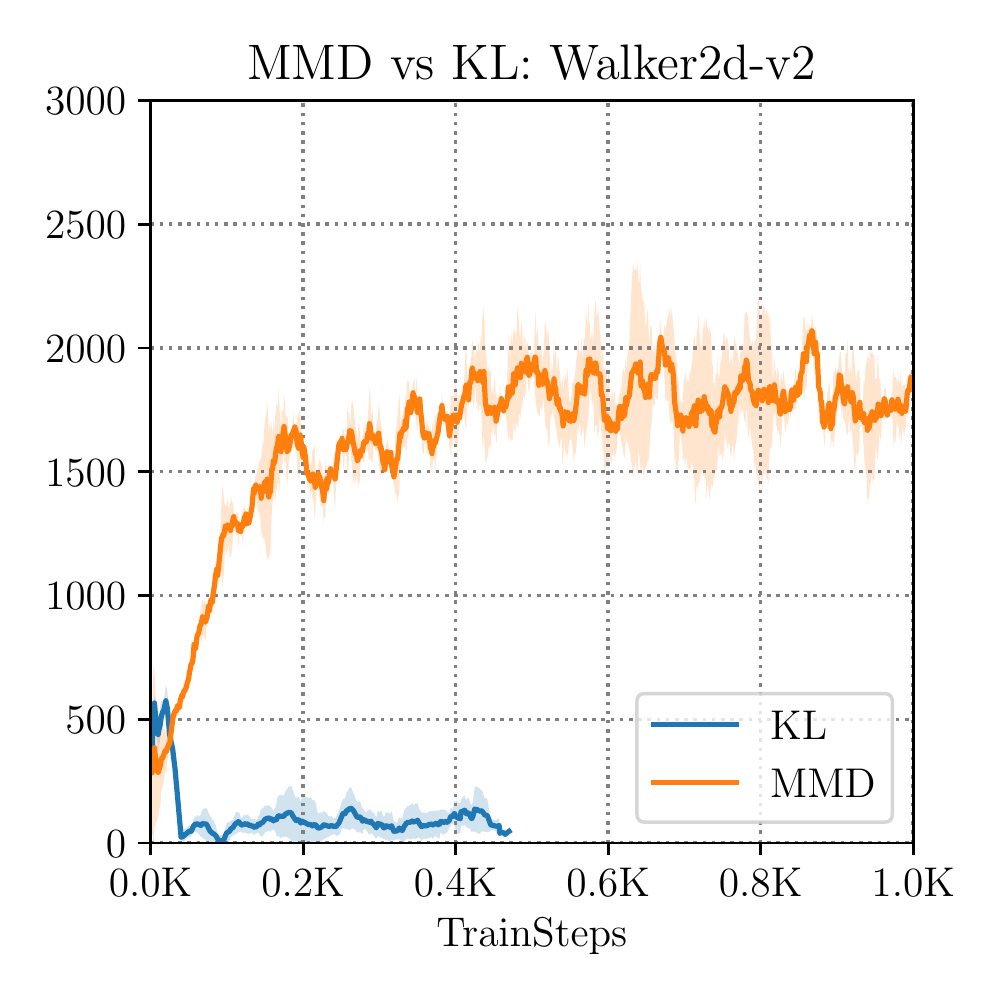}
    \end{subfigure}%
    ~
    \begin{subfigure}[t]{0.31\textwidth}
        \centering
        \includegraphics[width=0.99\linewidth]{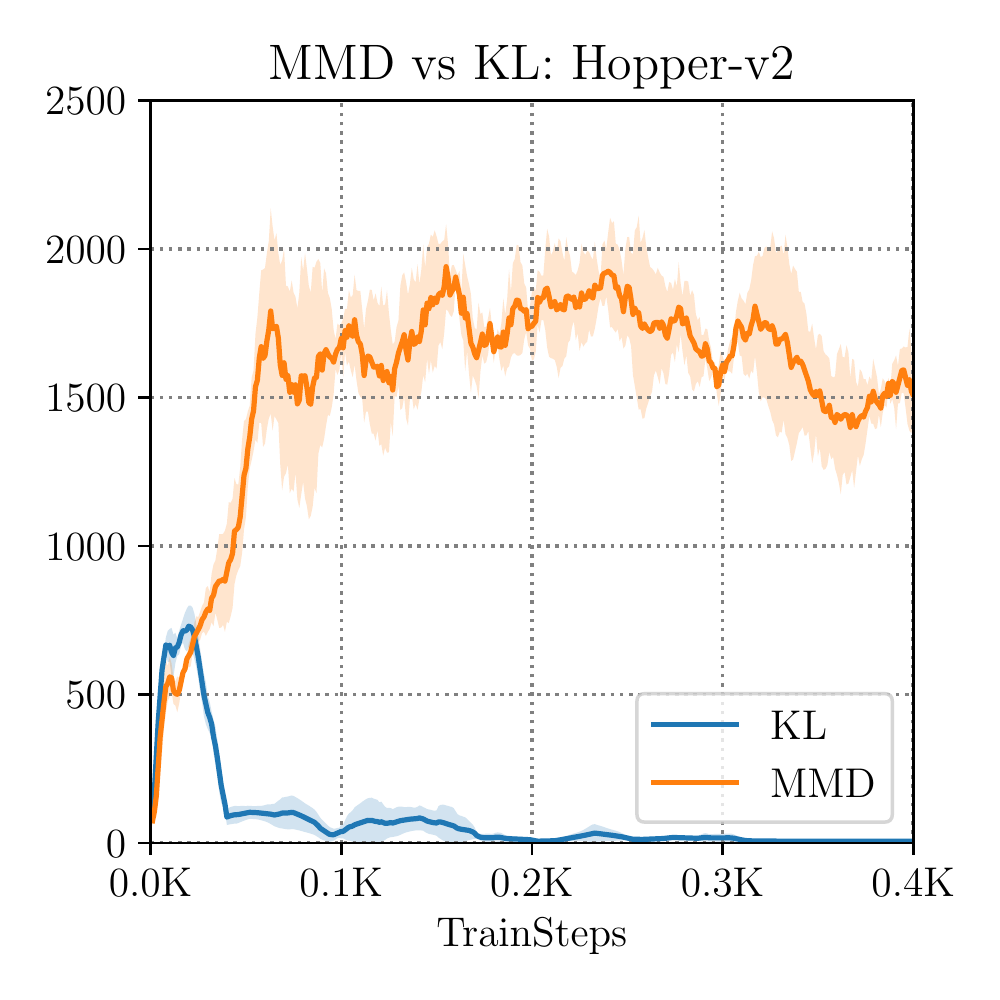}
    \end{subfigure}%
    \caption{Performance Trends (measured in AverageReturn) for Hopper-v2, HalfCheetah-v2 and Walker2d-v2 environments with BEAR-QL algorthm but varying kind of constraint. In general we find that using the KL constraint leads to worse performance. However, in some rare cases (for example, HalfCheetah-v2), the KL constraint learns faster. In general, we find that the KL-constraint often leads to diverging Q-values. This experiment corresponds to medium-quality data.}
    \label{fig:kl_vs_mmd_single}
\end{figure*}

We further study the performance of the KL-divergence in the setting when the KL-divergence is stable. In this setting we needed to perform extensive hyperparameter tuning to find the optimal Lagrange multiplier for the KL-constraint and plain and simple dual descent always gave us an unstable solution with the KL-constraint. Even in this case tuned hyperparameter case, we find that using a KL-constraint is worse than using a MMD-constraint. Trends are summarized in Figure~\ref{fig:tuned_kl}. 
\begin{figure*}[t]
    \centering
    \begin{subfigure}[t]{0.31\textwidth}
        \centering
        \includegraphics[width=0.99\linewidth]{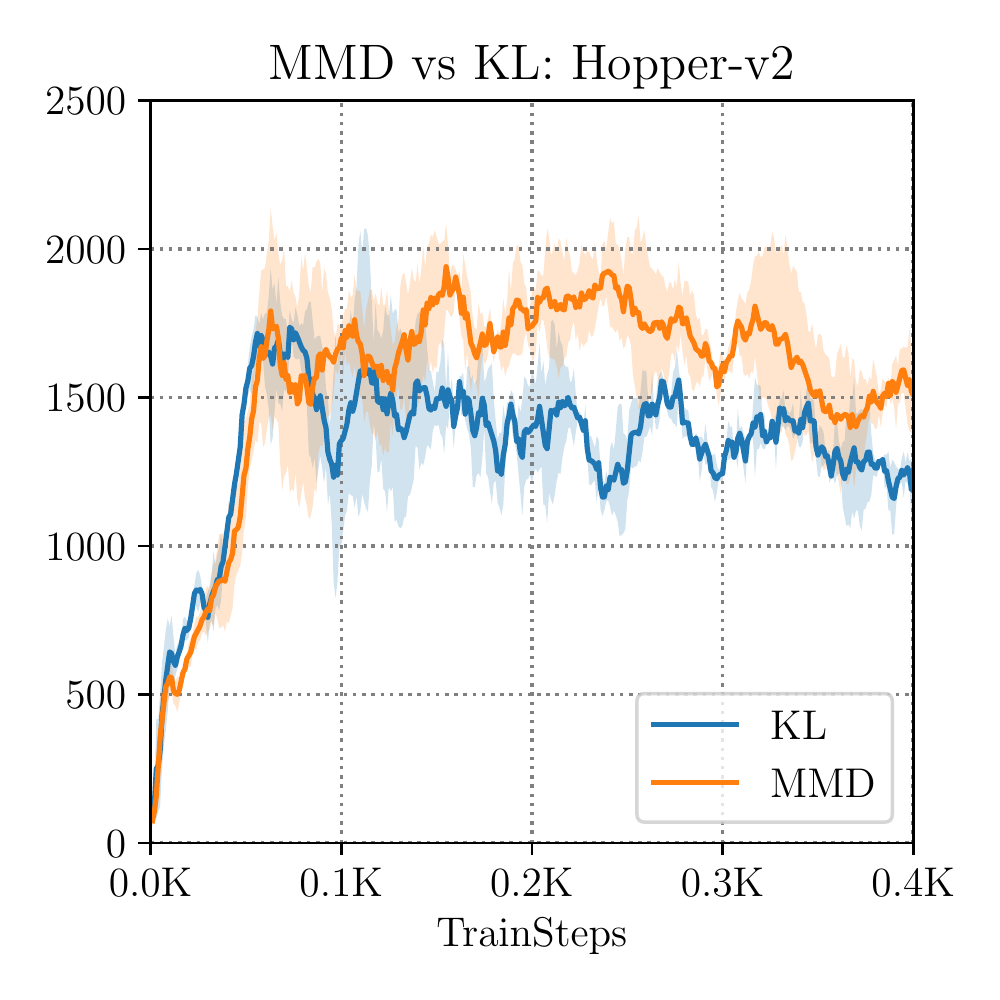}
    \end{subfigure}%
    ~
    \begin{subfigure}[t]{0.31\textwidth}
        \centering
        \includegraphics[width=0.99\linewidth]{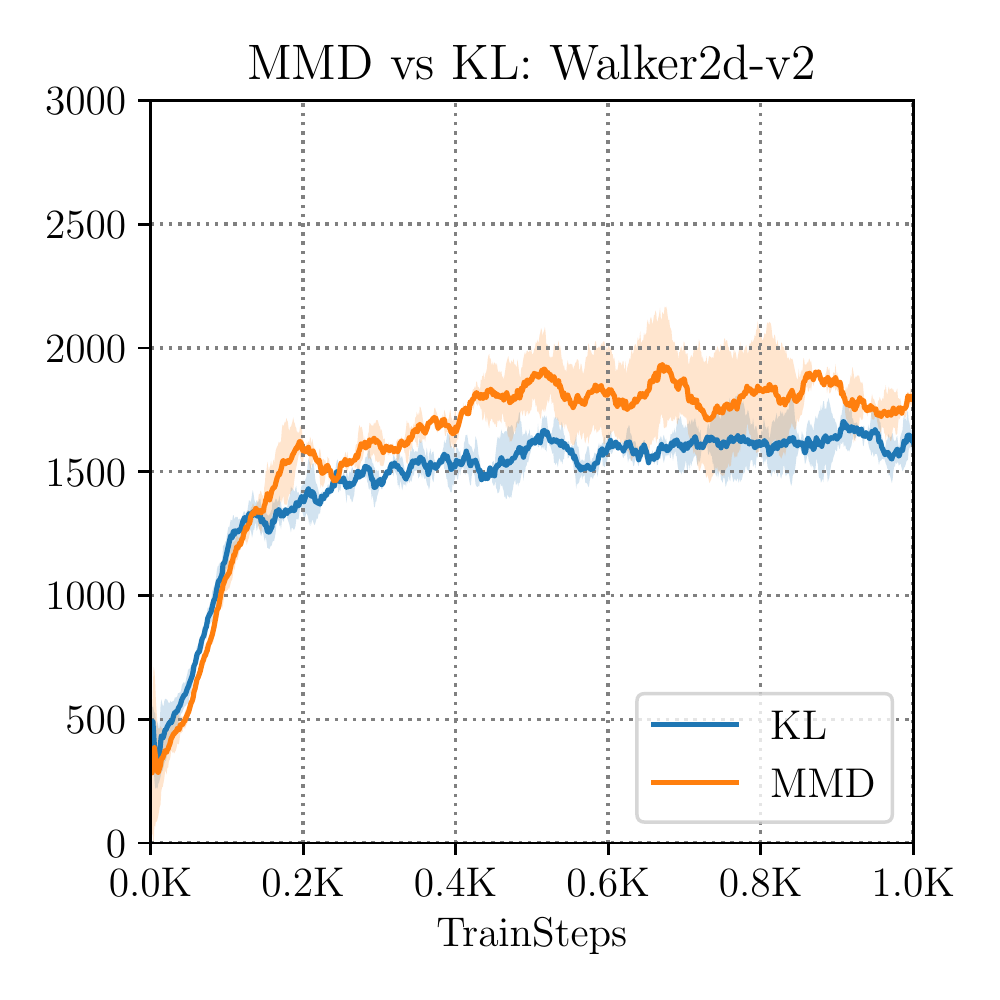}
    \end{subfigure}%
    \caption{Performance Trends (measured in Average Returns) for Hopper-v2 and Walker2d-v2 environments with BEAR-QL algorithm with an extensively tuned KL-constraint and the MMD-constraint from. Note that the MMD-constraint still outperforms the KL-constraint.}
    \label{fig:tuned_kl}
\end{figure*}

As described in Section~\ref{app:bearql-more}, we can achieve a reduced overall error $||V_k(s) - V^*(s)||$, if we use the MMD support-matching constraint alongside importance sampling, i.e. when we multiply the Bellman error with the inverse of the behaviour policy density. Empirically, we tried reweighting the Bellman error by inverse of the fitted behavior policy density, alongside the BEAR-QL algorithm. The trends for two environments and medium-quality data are summarized in Figure~\ref{fig:is}. We found that reweighting the Bellman error wasn't that useful, although in theory, it provides an absolute error reduction as described by Theorem 4.1. We hypothesize that this could be due to the possible reason that when optimizing neural nets using stochastic gradient procedures, importance sampling isn't that beneficial~\citep{byrd19is}.

\begin{figure}
    \centering
    \includegraphics[scale=0.3]{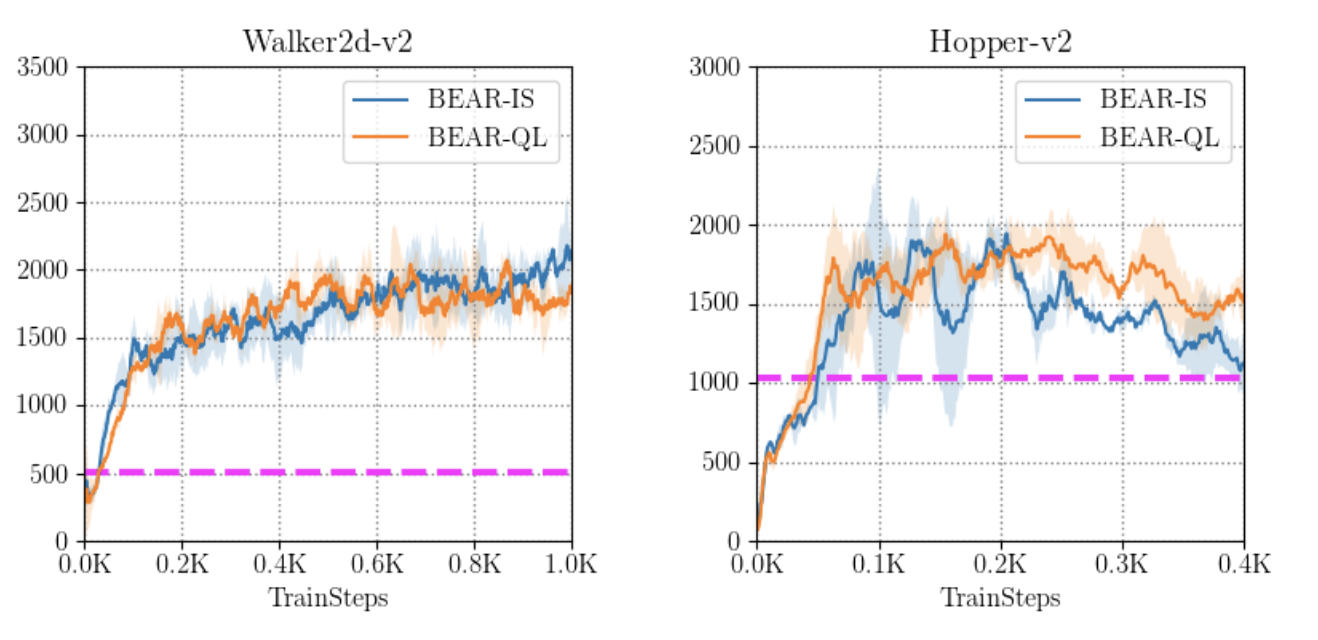}
    \caption{BEAR with importance sampled Bellman error minimization. We find that importance sampling isn't that beneficial in practice.}
    \label{fig:is}
\end{figure}


\end{document}